\theoremstyle{plain}
\newtheorem{theorem}{Theorem}[section]
\newtheorem{lemma}[theorem]{Lemma}
\newtheorem*{lemma*}{Lemma}
\newtheorem{definition}[theorem]{Definition}
\theoremstyle{definition}
\newcommand{\eps}{\epsilon}
\DeclareMathOperator*{\argmin}{argmin}
\newcommand{\Vht}{\widehat{V}^{(t+1)}}
\newcommand{\algo}{\textsc{SMP-PCA}\xspace}
\newcommand{\prevalgo}{\textsc{LELA}\xspace}
\title{Single Pass PCA of Matrix Products}
\author{
Shanshan Wu\\
{The University of Texas at Austin}\\
\texttt{shanshan@utexas.edu}
\and
Srinadh Bhojanapalli\\
{Toyota Technological Institute at Chicago}\\
\texttt{srinadh@ttic.edu}\\
\and
Sujay Sanghavi\\
{The University of Texas at Austin}\\
\texttt{sanghavi@mail.utexas.edu}
\and
Alexandros G.~Dimakis\\
{The University of Texas at Austin}\\
\texttt{dimakis@austin.utexas.edu}
}
\begin{document}
\maketitle
\begin{abstract}
In this paper we present a new algorithm for computing a low rank approximation of the product $A^TB$ by taking only a single pass of the
two matrices $A$ and $B$. The straightforward way to do this is to (a) first sketch $A$ and $B$ individually, and then (b) find the top components using PCA on the sketch. Our algorithm in contrast retains additional summary information about $A,B$ (e.g. row and column norms etc.) and uses this additional information to obtain an improved approximation from the sketches. Our main analytical result establishes a comparable spectral norm guarantee to existing two-pass methods; in addition we also provide results from an Apache Spark implementation that shows better computational and statistical performance on real-world and synthetic evaluation datasets.
\end{abstract}
\section{Introduction}\label{intro}
Given two large matrices $A$ and $B$ we study the problem of finding a low rank approximation of their product $A^T B$, using only one pass over the matrix elements.
This problem has many applications in machine learning and statistics. For example, if $A=B$, then this general problem reduces to Principal Component Analysis (PCA). Another example is a low rank approximation of a co-occurrence matrix from large logs, e.g., $A$ may be a user-by-query matrix and $B$ may be a user-by-ad matrix, so $A^TB$ computes the joint counts for each query-ad pair. The matrices $A$ and $B$ can also be two large bag-of-word matrices. For this case, each entry of $A^TB$ is the number of times a pair of words co-occurred together. As a fourth example, $A^TB$ can be a cross-covariance matrix between two sets of variables, e.g., $A$ and $B$ may be genotype and phenotype data collected on the same set of observations. A low rank approximation of the product matrix is useful for Canonical Correlation Analysis (CCA)~\cite{CCA}. For all these examples, $A^TB$ captures pairwise variable interactions and a low rank approximation is a way to efficiently represent the significant pairwise interactions in sub-quadratic space.  

Let $A$ and $B$ be matrices of size $d\times n$ ($d\gg n$) assumed too large to fit in main memory. To obtain a rank-$r$ approximation of $A^T B$, a naive way is to compute $A^T B$ first, and then perform truncated singular value decomposition (SVD) of $A^T B$. This algorithm needs $O(n^2d)$ time and $O(n^2)$ memory to compute the product, followed by an SVD of the $n\times n$ matrix. An alternative option is to directly run power method on $A^T B$ without explicitly computing the product. Such an algorithm will need to access the data matrices $A$ and $B$ multiple times and the disk IO overhead for loading the matrices to memory multiple times will be the major performance bottleneck. 

For this reason, a number of recent papers introduce randomized algorithms that require only a few passes over the data, approximately linear memory, and also provide spectral norm guarantees. The key step in these algorithms is to compute a smaller representation of data. This can be achieved by two different methods: (1) dimensionality reduction, i.e., matrix sketching~\cite{sarlos, clarkson2013low, magen2011low, stablerank}; (2) random sampling~\cite{drineas2006fast, leverage}. The recent results of Cohen et al.~\cite{stablerank} provide the strongest spectral norm guarantee of the former. They show that a sketch size of $O(\tilde{r}/\eps^2)$ suffices for the sketched matrices $\widetilde{A}^T\widetilde{B}$ to achieve a spectral error of $\epsilon$, where $\tilde{r}$ is the maximum stable rank of $A$ and $B$. Note that $\widetilde{A}^T\widetilde{B}$ is not the desired rank-$r$ approximation of $A^TB$. On the other hand, ~\cite{leverage} is a recent sampling method with very good performance guarantees. The authors consider entrywise sampling based on column norms, followed by a matrix completion step to compute low rank approximations. There is also a lot of interesting work on streaming PCA, but none can be directly applied to the general case when $A$ is different from $B$ (see Figure~\ref{svdAsvdB}). Please refer to Appendix~\ref{sec:related} for more discussions on related work.

Despite the significant volume of prior work, there is no method that computes a rank-$r$ approximation of $A^TB$ when the entries of $A$ and $B$ are streaming in a single pass
\footnote{One straightforward idea is to sketch each matrix individually and perform SVD on the product of the sketches. We compare against that scheme and show that we can perform arbitrarily better using our rescaled JL embedding.}. Bhojanapalli et al.~\cite{leverage} consider a two-pass algorithm which computes column norms in the first pass and uses them to sample in a second pass over the matrix elements. In this paper, we combine ideas from the sketching and sampling literature 
to obtain the first algorithm that requires only a single pass over the data. 

{\bf Contributions:} 
\begin{itemize}
\item We propose a one-pass algorithm~\algo (which stands for Streaming Matrix Product PCA) that computes a rank-$r$ approximation of $A^T B$ in time $O((\text{nnz}(A)+\text{nnz}(B))\frac{\rho^2 r^3\tilde{r}}{\eta^2}+ \frac{nr^6\rho^4\tilde{r}^3}{\eta^4})$. Here $\text{nnz}(\cdot)$ is the number of non-zero entries, $\rho$ is the condition number, $\tilde{r}$ is the maximum stable rank, and $\eta$ measures the spectral norm error. Existing two-pass algorithms such as~\cite{leverage} typically have longer runtime than our algorithm (see Figure~\ref{runtime}). We also compare our algorithm with the simple idea that first sketches $A$ and $B$ separately and then performs SVD on the product of their sketches. We show that our algorithm \textit{always} achieves better accuracy and can perform arbitrarily better if the column vectors of $A$ and $B$ come from a cone (see Figures~\ref{tildeM}, ~\ref{svdOnepass}, ~\ref{siftbag}). 
 
\item The central idea of our algorithm is a novel \textit{rescaled JL embedding} that combines information from matrix sketches and vector norms. This allows us to get better estimates of dot products of high dimensional vectors compared to previous sketching approaches. We explain the benefit compared to a naive JL embedding in Figure~\ref{tildeM} and the related discussion; we believe it may be of more general interest beyond low rank matrix approximations. 

\item We prove that our algorithm recovers a low rank approximation of $A^T B$ up to an error that depends on $\|A^T B -(A^TB)_r \|$ and $\|A^TB\|$, decaying with increasing sketch size and number of samples (Theorem~\ref{mainTheorem}). The first term is a consequence of low rank approximation and vanishes if $A^T B$ is exactly rank-$r$. The second term results from matrix sketching and subsampling; the bounds have similar dependencies as in~\cite{stablerank}. 

\item We implement \algo in Apache Spark and perform several distributed experiments on synthetic and real datasets. 
Our distributed implementation uses several design innovations described in Section~\ref{experiments} and Appendix~\ref{sec:app_samp} and it is the only Spark implementation that we are aware of that can handle matrices that are large in both dimensions. 
Our experiments show that we improve by approximately a factor of $2\times$ in running time compared to the previous state of the art and scale gracefully as the cluster size increases. The source code is available online~\cite{github}.

\item In addition to better performance, our algorithm offers another advantage: It is possible to compute low-rank approximations to $A^T B$ even when the entries of the two matrices arrive in some arbitrary order (as would be the case in streaming logs).
We can therefore discover significant correlations even when the original datasets cannot be stored, for example due to storage or privacy limitations. 
\end{itemize}
\section{Problem setting and algorithms}\label{algorithms}
Consider the following problem: given two matrices $A\in \mathbb{R}^{d\times n_1}$ and $B\in \mathbb{R}^{d\times n_2}$ that are stored in disk, find a rank-$r$ approximation of their product $A^TB$. In particular, we are interested in the setting where both $A$, $B$ and $A^TB$ are too large to fit into memory. This is common for modern large scale machine learning applications. For this setting, we develop a single-pass algorithm \algo that computes the rank-$r$ approximation without explicitly forming the entire matrix $A^TB$. 

{\bf Notations.}
Throughout the paper, we use $A(i,j)$ or $A_{ij}$ to denote $(i,j)$ entry for any matrix $A$. Let $A_i$ and $A^j$ be the $i$-th column vector and $j$-th row vector. We use $\|A\|_F$ for Frobenius norm, and $\|A\|$ for spectral (or operator) norm. The optimal rank-$r$ approximation of matrix $A$ is $A_r$, which can be found by SVD. Given a set $\Omega \subset [n_1]\times [n_2]$ and a matrix $A\in \mathbb{R}^{n_1\times n_2}$, we define $P_\Omega(A)\in \mathbb{R}^{n_1\times n_2}$ as the projection of $A$ on $\Omega$, i.e., $P_\Omega(A)(i,j) = A(i,j)$ if $(i,j)\in\Omega$ and 0 otherwise.

\subsection{\algo}
Our algorithm \algo (Streaming Matrix Product PCA) takes four parameters as input: the desired rank $r$, number of samples $m$, sketch size $k$, and the number of iterations $T$. Performance guarantee involving these parameters is provided in Theorem~\ref{mainTheorem}. As illustrated in Figure~\ref{overview}, our algorithm has three main steps: 1) compute sketches and side information in one pass over $A$ and $B$; 2) given partial information of $A$ and $B$, estimate {\it important} entries of $A^T B$; 3) compute low rank approximation given estimates of a few entries of $A^T B$. Now we explain each step in detail.

\begin{figure*}[ht]
\centering
\includegraphics[width=\textwidth]{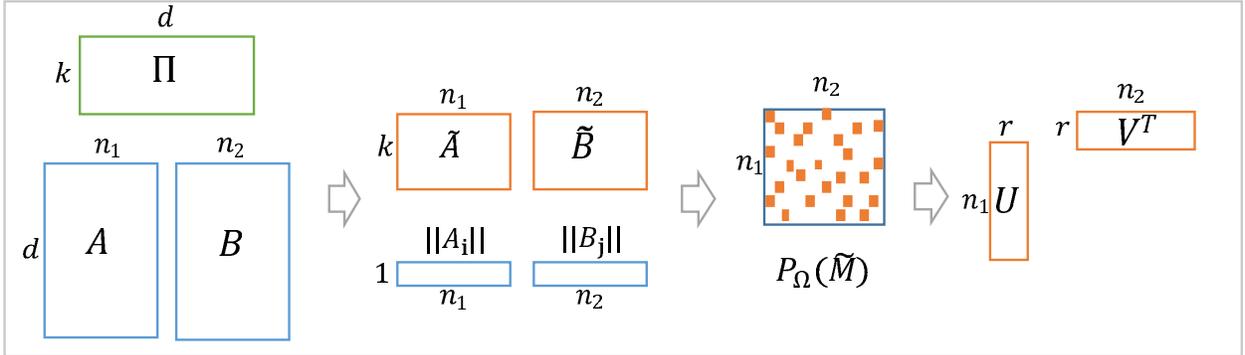}
\caption{An overview of our algorithm. A single pass is performed over the data to produce the sketched matrices $\widetilde{A}$, $\widetilde{B}$ and the column norms $\|A_i\|$, $\|B_j\|$, for all $(i,j)\in [n_1]\times[n_2]$. We then compute the sampled matrix $P_{\Omega}(\widetilde{M})$ through a biased sampling process, where $P_{\Omega}(\widetilde{M}) = \widetilde{M}(i,j)$ if $(i,j)\in \Omega$ and zero otherwise. Here $\Omega$ represents the set of sampled entries. We define $\widetilde{M}$ as an estimator for $A^TB$, and compute its entry as $\widetilde{M}(i,j) = \|A_i\|\cdot\|B_j\|\cdot\frac{\widetilde{A}_i^T\widetilde{B}_j}{\|\widetilde{A}_i\|\cdot \|\widetilde{B}_j\|}$. Performing matrix completion on $P_{\Omega}(\widetilde{M})$ gives the desired rank-$r$ approximation.}\label{overview}
\end{figure*}

\begin{algorithm}[ht]
   \caption{\algo: Streaming Matrix Product PCA}
   \label{OnePassLELA}
\begin{algorithmic}[1]
   \STATE {\bfseries Input:} $A\in \mathbb{R}^{d\times n_1}$, $B\in \mathbb{R}^{d\times n_2}$, desired rank: $r$, sketch size: $k$, number of samples: $m$, number of iterations: $T$ 
   \STATE Construct a random matrix $\Pi\in \mathbb{R}^{k\times d}$, where $\Pi(i,j)\sim \mathcal{N}(0, 1/k)$, $\forall (i,j)\in [k]\times[d]$. Perform a single pass over $A$ and $B$ to obtain: $\widetilde{A}=\Pi A$, $\widetilde{B}=\Pi B$, and $\|A_i\|$, $\|B_j\|$, $\forall (i,j)\in [n_1]\times [n_2]$.
   \STATE Sample each entry $(i,j)\in [n_1]\times [n_2]$ independently with probability $\hat{q}_{ij}=\min\{1, q_{ij}\}$, where $q_{ij}$ is defined in Eq.(\ref{q_ij}); maintain a set $\Omega \subset [n_1]\times [n_2]$ which stores all the sampled pairs $(i,j)$.
   \STATE Define $\widetilde{M}\in\mathbb{R}^{n_1 \times n_2}$, where $\widetilde{M}(i,j)$ is given in Eq. (\ref{dot-approx}). Calculate $P_\Omega(\widetilde{M})\in\mathbb{R}^{n_1 \times n_2}$, where $P_{\Omega}(\widetilde{M}) = \widetilde{M}(i,j)$ if $(i,j)\in \Omega$ and zero otherwise.
   \STATE Run WAltMin($P_\Omega(\widetilde{M})$, $\Omega$, $r$, $\hat{q}$, $T$ ), see Appendix~\ref{appWAltMin} for more details.\\
   \STATE {\bfseries Output:} $\widehat{U}\in \mathbb{R}^{n_1\times r}$ and $\widehat{V}\in\mathbb{R}^{n_2\times r}$. 
\end{algorithmic}
\end{algorithm}

{\bf Step 1: Compute sketches and side information in one pass over $A$ and $B$.}
In this step we compute sketches $\widetilde{A}:=\Pi A$ and $\widetilde{B}:=\Pi B$, where $\Pi\in \mathbb{R}^{k\times d}$ is a random matrix with entries being i.i.d. $\mathcal{N}(0, 1/k)$ random variables. It is known that $\Pi$ satisfies an "oblivious Johnson-Lindenstrauss (JL) guarantee"~\cite{sarlos}\cite{sketching} and it helps preserving the top row spaces of $A$ and $B$~\cite{clarkson2013low}. Note that any sketching matrix $\Pi$ that is an oblivious subspace embedding can be considered here, e.g., sparse JL transform and randomized Hadamard transform (see~\cite{stablerank} for more discussion).

Besides $\widetilde{A}$ and $\widetilde{B}$, we also compute the $L_2$ norms for all column vectors, i.e., $\|A_i \|$ and $\| B_j \|$, for all $(i, j) \in [n_1] \times [n_2]$. We use this additional information to design better estimates of $A^T B$ in the next step, and also to determine {\it important} entries of $\widetilde{A}^T \widetilde{B}$ to sample. Note that this is the only step that needs one pass over data.

{\bf Step 2: Estimate important entries of $A^T B$ by rescaled JL embedding.} 
In this step we use partial information obtained from the previous step to compute a few important entries of $\widetilde{A}^T \widetilde{B}$. We first determine what entries of $\widetilde{A}^T \widetilde{B}$ to sample, and then propose a novel rescaled JL embedding for estimating those entries.

We sample entry $(i,j)$ of $A^TB$ independently with probability $\hat{q}_{ij} = \min\{1, q_{ij}\}$, where 
\begin{equation}
q_{ij} = m \cdot (\frac{\|A_i\|^2}{2n_2\|A\|^2_F}+ \frac{\|B_j\|^2}{2n_1\|B\|^2_F}). \label{q_ij}
\end{equation}
Let $\Omega \subset [n_1]\times [n_2]$ be the set of sampled entries $(i,j)$. Since $\mathbb{E}(\sum_{i,j}q_{ij})=m$, the expected number of sampled entries is roughly $m$. The special form of $q_{ij}$ ensures that we can draw $m$ samples in $O(n_1+m\log(n_2))$ time; we show how to do this in Appendix~\ref{sec:app_samp}. 

Note that $q_{ij}$ intuitively captures important entries of $A^T B$ by giving higher weight to heavy rows and columns. We show in Section~\ref{sec:analysis} that this sampling actually generates good approximation to the matrix $A^T B$. 

The biased sampling distribution of Eq. (\ref{q_ij}) is first proposed by Bhojanapalli et al.~\cite{leverage}. However, their algorithm~\cite{leverage} needs a second pass to compute the sampled entries, while we propose a novel way of estimating dot products, using information obtained in the first step.  

Define $\widetilde{M}\in\mathbb{R}^{n_1 \times n_2}$ as
\begin{equation}
\widetilde{M}(i,j) = \|A_i\|\cdot\|B_j\|\cdot\frac{\widetilde{A}_i^T\widetilde{B}_j}{\|\widetilde{A}_i\|\cdot \|\widetilde{B}_j\|} \label{dot-approx}.
\end{equation} 
Note that we will not compute and store $\widetilde{M}$, instead, we only calculate $\widetilde{M}(i,j)$ for $(i,j)\in\Omega$. This matrix is denoted as $P_\Omega(\widetilde{M})$, where $P_\Omega(\widetilde{M})(i,j) = \widetilde{M}(i,j)$ if $(i,j)\in\Omega$ and 0 otherwise.

\begin{figure*}[ht]
\centering
\subfigure[]{
	\includegraphics[width=0.5\textwidth]{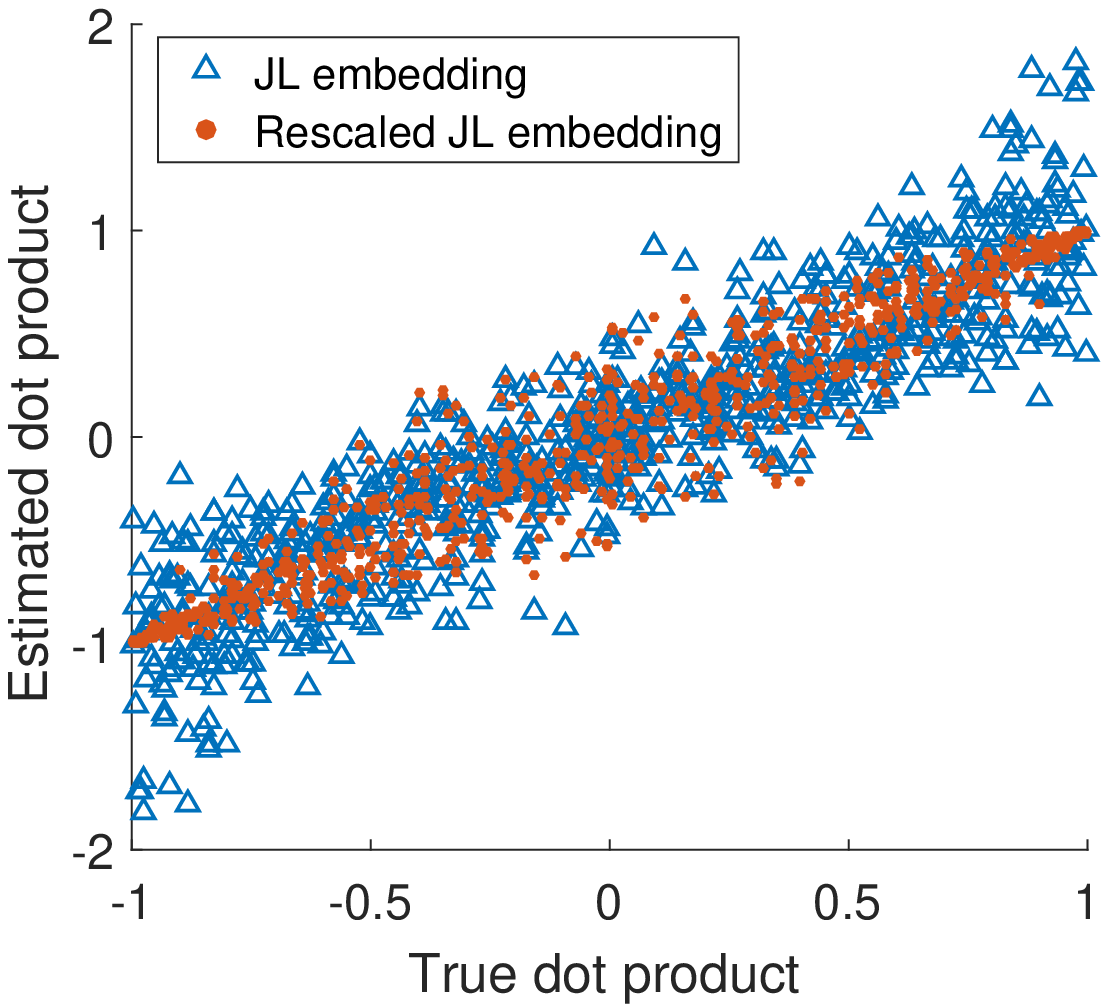}
	\label{rescale}}
\subfigure[]{
	\includegraphics[width=0.46\textwidth]{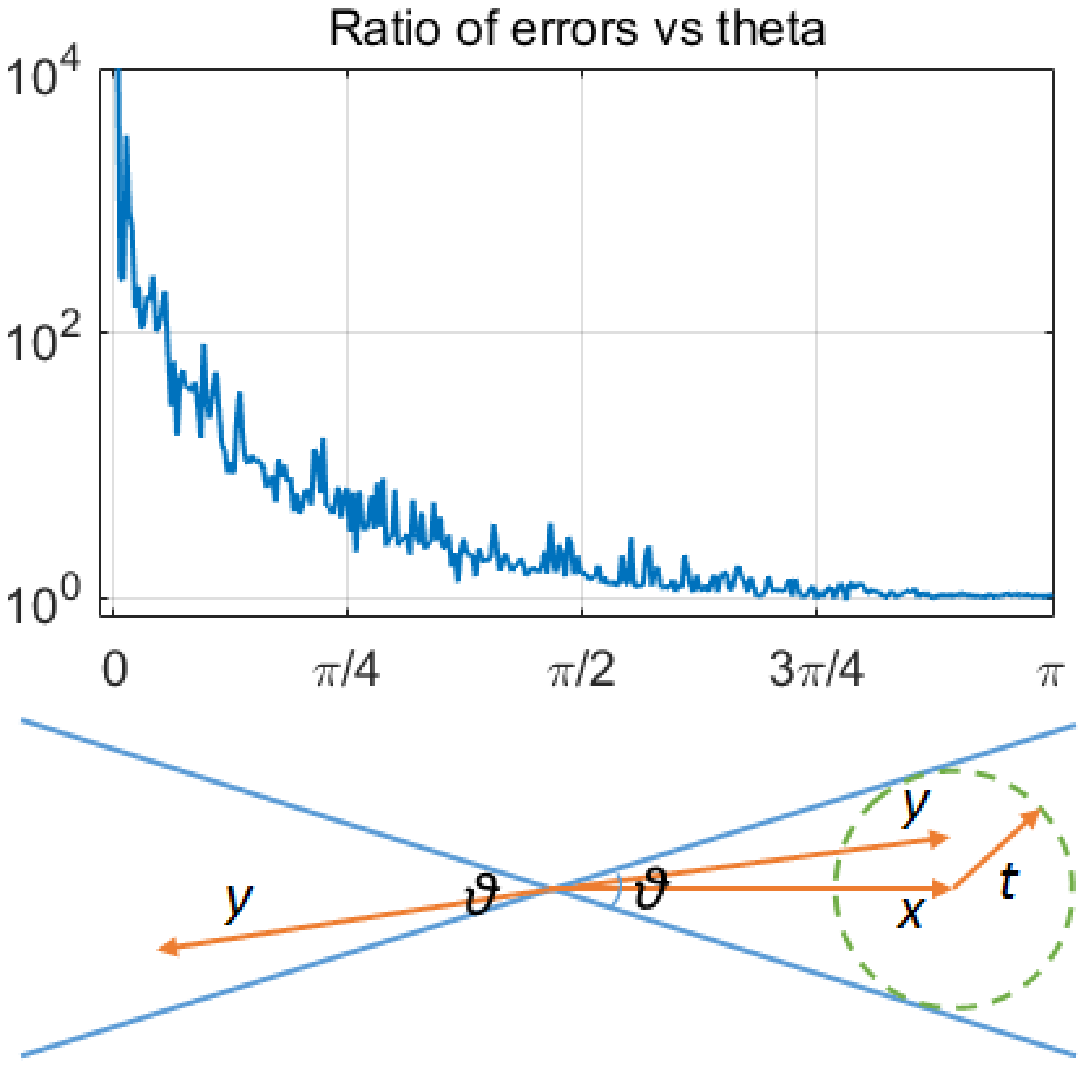}
	\label{cone}}
\caption{(a) Rescaled JL embedding (red dots) captures the dot products with smaller variance compared to JL embedding (blue triangles). Mean squared error: 0.053 versus 0.129. (b) Lower figure illustrates how to construct unit-norm vectors from a cone with angle $\theta$. Let $x$ be a fixed unit-norm vector, and let $t$ be a random Gaussian vector with expected norm $\tan(\theta/2)$, we set $y$ as either $x+t$ or $-(x+t)$ with probability half, and then normalize it. Upper figure plots the ratio of spectral norm errors $\|A^TB-\widetilde{A}^T\widetilde{B}\|/\|A^TB-\widetilde{M}\|$, when the column vectors of $A$ and $B$ are unit vectors drawn from a cone with angle $\theta$. Clearly, $\widetilde{M}$ has better accuracy than $\widetilde{A}^T\widetilde{B}$ for all possible values of $\theta$, especially when $\theta$ is small.}\label{tildeM}
\end{figure*}

We now explain the intuition of Eq. (\ref{dot-approx}), and why $\widetilde{M}$ is a better estimator than $\widetilde{A}^T\widetilde{B}$. To estimate the $(i,j)$ entry of $A^TB$, a straightforward way is to use $\widetilde{A}_i^T \widetilde{B}_j=\|\widetilde{A}_i\|\cdot \|\widetilde{B}_j\| \cdot \cos{\widetilde{\theta}_{ij}}$, where $\widetilde{\theta}_{ij}$ is the angle between vectors $\widetilde{A}_i$ and $\widetilde{B}_j$. Since we already know the actual column norms, a potentially better estimator would be $\|A_i\|\cdot \|B_j\| \cdot \cos{\widetilde{\theta}_{ij}}$. This removes the uncertainty that comes from distorted column norms\footnote{We also tried using the cosine rule for computing the dot product, and another sketching method specifically designed for preserving angles~\cite{angle}, but empirically those methods perform worse than our current estimator.}. 

Figure~\ref{rescale} compares the two estimators $\widetilde{A}_i^T \widetilde{B}_j$ (JL embedding) and $\widetilde{M}(i,j)$ (rescaled JL embedding) for dot products. We plot simulation results on pairs of unit-norm vectors with different angles. The vectors have dimension 1,000 and the sketching matrix has dimension 10-by-1,000. Clearly rescaling by the actual norms help reduce the estimation uncertainty. This phenomenon is more prominent when the true dot products are close to $\pm1$, which makes sense because $\cos{\theta}$ has a small slope when $\cos{\theta}$ approaches $\pm1$, and hence the uncertainty from angles may produce smaller distortion compared to that from norms. In the extreme case when $\cos{\theta}=\pm 1$, rescaled JL embedding can perfectly recover the true dot product. 

In the lower part of Figure~\ref{cone} we illustrate how to construct unit-norm vectors from a cone with angle $\theta$. Given a fixed unit-norm vector $x$, and a random Gaussian vector $t$ with expected norm $\tan(\theta/2)$, we construct new vector $y$ by randomly picking one from the two possible choices $x+t$ and $-(x+t)$, and then renormalize it. Suppose the columns of $A$ and $B$ are unit vectors randomly drawn from a cone with angle $\theta$, we plot the ratio of spectral norm errors $\|A^TB-\widetilde{A}^T\widetilde{B}\|/\|A^TB-\widetilde{M}\|$ in Figure~\ref{cone}. We observe that $\widetilde{M}$ always outperforms $\widetilde{A}^T\widetilde{B}$ and can be much better when $\theta$ approaches zero, which agrees with the trend indicated in Figure~\ref{rescale}.

{\bf Step 3: Compute low rank approximation given estimates of few entries of $A^T B$.}  
Finally we compute the low rank approximation of $A^T B$ from the samples using alternating least squares:
\begin{equation}
\min_{U,V\in\mathbb{R}^{n\times r}} \sum_{(i,j)\in \Omega} w_{ij}(e_i^T UV^T e_j-\widetilde{M}(i,j))^2,
\end{equation}
where $w_{ij}=1/\hat{q}_{ij}$ denotes the weights, and $e_i$, $e_j$ are standard base vectors. This is a popular technique for low rank recovery and matrix completion (see~\cite{leverage} and the references therein). After $T$ iterations, we will get a rank-$r$ approximation of $\widetilde{M}$ presented in the convenient factored form. This subroutine is quite standard, so we defer the details to Appendix~\ref{appWAltMin}.

\section{Analysis}\label{sec:analysis}
Now we present the main theoretical result. Theorem~\ref{mainTheorem} characterizes the interaction between the sketch size $k$, the sampling complexity $m$, the number of iterations $T$, and the spectral error $\|(A^TB)_r-\widehat{A^TB}_r\|$, where $\widehat{A^TB}_r$ is the output of \algo, and $(A^TB)_r$ is the optimal rank-$r$ approximation of $A^TB$. Note that the following theorem assumes that $A$ and $B$ have the same size. For the general case of $n_1\ne n_2$, Theorem~\ref{mainTheorem} is still valid by setting $n = \max\{n_1,n_2\}$.

\begin{theorem} \label{mainTheorem}
Given matrices $A\in \mathbb{R}^{d\times n}$ and $B\in \mathbb{R}^{d\times n}$, let $(A^TB)_r$ be the optimal rank-$r$ approximation of $A^TB$. Define $\tilde{r}= \max\{\frac{\|A\|^2_F}{\|A\|^2}, \frac{\|B\|^2_F}{\|B\|^2}\}$ as the maximum stable rank, and $\rho=\frac{\sigma^*_1}{\sigma^*_r}$ as the condition number of $(A^TB)_r$, where $\sigma^*_i$ is the $i$-th singular values of $A^TB$. 

Let $\widehat{A^TB}_r$ be the output of Algorithm \algo. If the input parameters $k$, $m$, and $T$ satisfy
\begin{equation}
k \ge\frac{C_1\|A\|^2\|B\|^2\rho^2r^3}{\|A^TB\|_F^2} \cdot \frac{\max\{\tilde{r}, 2\log(n)\}+\log{(3/\gamma)}}{\eta^2}, \label{dim}
\end{equation}
\begin{equation}
m \ge \frac{C_2\tilde{r}^2}{\gamma} \cdot \left(\frac{\|{A}\|^2_F+\|{B}\|^2_F}{\|{A}^T{B}\|_F}\right)^2\cdot \frac{nr^3\rho^2\log(n)T^2}{\eta^2} , \label{sam}
\end{equation}
\begin{equation}
T \ge \log(\frac{\|A\|_F+\|B\|_F}{\zeta}),
\end{equation}
where $C_1$ and $C_2$ are some global constants independent of $A$ and $B$. Then with probability at least $1-\gamma$, we have
\begin{equation}
\|(A^TB)_r-\widehat{A^TB}_r\| \le \eta\|A^TB - (A^TB)_r\|_F + \zeta + \eta \sigma_r^*. \label{errBound}
\end{equation} 
\end{theorem}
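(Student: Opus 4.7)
The plan is to decompose the overall error via triangle inequality into two pieces: the entrywise error of the rescaled JL estimator $\widetilde{M}$ of $A^TB$, and the matrix-completion error produced by running WAltMin on the biased sample $P_\Omega(\widetilde{M})$. Concretely, I would use
\[
\|(A^TB)_r - \widehat{A^TB}_r\| \le \|(A^TB)_r - \widetilde{M}_r\| + \|\widetilde{M}_r - \widehat{A^TB}_r\|,
\]
where $\widetilde{M}_r$ denotes the best rank-$r$ approximation of $\widetilde{M}$. By a Wedin/Weyl-style bound the first term is controlled by $\|A^TB - \widetilde{M}\|$, attributable entirely to the sketching step, while the second term is what the weighted alternating minimization routine needs to deliver.

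First I would bound $\|A^TB - \widetilde{M}\|$. Since the entries of $\Pi$ are i.i.d.\ $\mathcal{N}(0,1/k)$, standard JL concentration shows that each inner product $\widetilde{A}_i^T\widetilde{B}_j$ is close to $A_i^T B_j$ with high probability; rescaling to $\widetilde{M}(i,j) = \|A_i\|\|B_j\|\cos\widetilde\theta_{ij}$ just replaces the sketched norms by exact ones and (as Figure~\ref{rescale} suggests) only reduces the entrywise variance. I would then promote entrywise guarantees to a spectral bound on $A^TB - \widetilde{M}$ through a matrix Bernstein / oblivious subspace embedding argument of the type used in~\cite{stablerank}, yielding $\|A^TB - \widetilde{M}\| \le \eta\sigma_r^*$ once $k$ exceeds the quantity in Eq.~(\ref{dim}); the stable rank $\tilde r$, condition number $\rho$, and factor $\|A\|^2\|B\|^2/\|A^TB\|_F^2$ enter exactly through this concentration step.

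Next I would handle the alternating minimization step by invoking the analysis framework of Bhojanapalli et al.~\cite{leverage}. Their result applies to any target matrix that is well-conditioned and biased-sampled with probabilities of the form in Eq.~(\ref{q_ij}), producing a rank-$r$ estimate whose error depends on the sampling noise, the residual beyond the best rank-$r$ part, and the number of iterations. I would apply it to $\widetilde{M}$ after writing $\widetilde{M} = (A^TB)_r + E_{\text{tail}} + E_{\text{sketch}}$, with $E_{\text{tail}} = A^TB - (A^TB)_r$ and $E_{\text{sketch}} = \widetilde{M} - A^TB$. The Frobenius norm of $E_{\text{tail}}$ is exactly $\|A^TB - (A^TB)_r\|_F$ and feeds the first term of Eq.~(\ref{errBound}); $E_{\text{sketch}}$ is absorbed by the sketching bound from the previous paragraph; and the geometric contraction of WAltMin produces the $\zeta$ term once $T \ge \log((\|A\|_F+\|B\|_F)/\zeta)$. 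The sample complexity in Eq.~(\ref{sam}) is exactly what their framework demands in order to keep the sampling-induced noise below $\eta\sigma_r^*$ in spectral norm while preserving the incoherence conditions used in the contraction argument.

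The main obstacle I expect is managing the two independent noise sources (sketching and sampling) simultaneously inside the WAltMin recursion, and verifying that the incoherence-like structural properties required by~\cite{leverage} transfer from $A^TB$ to $\widetilde{M}$. In particular, one must check that the contraction step still goes through when the observed entries are corrupted by the rescaled JL error, and that the sampling weights $q_{ij}$ built from the \emph{true} column norms of $A$ and $B$ remain valid surrogates for the leverage of $\widetilde{M}$ — essentially redoing the alternating minimization bookkeeping with an additive spectral perturbation and tracking how $\|E_{\text{sketch}}\|$ propagates through each of the $T$ iterations without inflating the $\eta\sigma_r^*$ budget.
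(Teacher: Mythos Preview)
Your high-level instincts are right, but the decomposition through $\widetilde{M}_r$ is not the route the paper takes, and it creates a real difficulty you haven't fully confronted. The paper never introduces $\widetilde{M}_r$; it keeps $(A^TB)_r$ as the target throughout and reruns the WAltMin convergence analysis of~\cite{leverage} with the observed matrix being $\widetilde{M}$ rather than $A^TB$. Concretely, the key new lemma (their Lemma~\ref{Thm2}) bounds $\|(U^{(t)})^T R_\Omega(\widetilde M - M_r) - (U^{(t)})^T(M-M_r)\|$ by $\delta\|M-M_r\|_F + \delta\epsilon\|A\|_F\|B\|_F + \epsilon\|A\|\|B\|$, where the two extra terms come from $\|\widetilde M - M\|_F$ and $\|\widetilde M - M\|$ respectively. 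This feeds into a modified contraction lemma (their Lemma~\ref{dist}) that picks up an additive $\eta$ per iteration, and \emph{that} is what produces the $\eta\sigma_r^*$ in the final bound. So the ``main obstacle'' you flag in your last paragraph---tracking $E_{\text{sketch}}$ through each WAltMin step---is in fact the entire proof, not a side issue.

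The specific trouble with black-boxing LELA on $\widetilde M$ is the tail term: LELA applied to $\widetilde M$ yields an error proportional to $\|\widetilde M - \widetilde M_r\|_F$, and to relate this back to $\|M-M_r\|_F$ you must absorb $\|\widetilde M - M\|_F \le \epsilon\|A\|_F\|B\|_F$. In your decomposition this Frobenius sketching error enters as a \emph{first-order} term $\eta\,\epsilon\|A\|_F\|B\|_F$, whereas in the paper's analysis it enters only as the second-order product $\delta\epsilon\|A\|_F\|B\|_F$ (because it first appears inside $\delta\|\widetilde M - M_r\|_F$). That difference matters: controlling $\eta\,\epsilon\|A\|_F\|B\|_F \le \eta\sigma_r^*$ forces $k \gtrsim \|A\|_F^2\|B\|_F^2/(\sigma_r^*)^2$, which does not match Eq.~(\ref{dim}). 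On the positive side, your worry about the sampling weights is not actually an obstacle: since $\widetilde M = (\widetilde A D_A)^T(\widetilde B D_B)$ and the columns of $\widetilde A D_A$ have norms exactly $\|A_i\|$, the distribution $q_{ij}$ is already the ``correct'' LELA distribution for $\widetilde M$.

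Two smaller points you are missing. First, the paper splits into two regimes: when $\|M-M_r\|_F \ge \frac{1}{576\rho r^{1.5}}\|M_r\|_F$, the theorem follows directly from the initialization bound $\|R_\Omega(\widetilde M) - M\| \le \delta\|M\|_F$ plus Weyl, with no WAltMin analysis needed; the contraction argument is only invoked in the complementary regime. Second, the final spectral bound is not read off from the contraction lemma alone---the paper explicitly expands the closed-form WAltMin update (their Eq.~(\ref{upd})) and bounds $\|M_r - \widehat U^{(t)}(\widehat V^{(t+1)})^T\|$ term by term, using both Lemma~\ref{dist} and Lemma~\ref{Thm2} once more at the end.
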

{\bf Remark 1.} Compared to the two-pass algorithm proposed by~\cite{leverage}, we notice that Eq. (\ref{errBound}) contains an additional error term $\eta \sigma_r^*$. This extra term captures the cost incurred when we are approximating entries of $A^TB$ by Eq. (\ref{dot-approx}) instead of using the actual values. The exact tradeoff between $\eta$ and $k$ is given by Eq. (\ref{dim}). On one hand, we want to have a small $k$ so that the sketched matrices can fit into memory. On the other hand, the parameter $k$ controls how much information is lost during sketching, and a larger $k$ gives a more accurate estimation of the inner products. 

{\bf Remark 2.} The dependence on $\frac{\|{A}\|^2_F+\|{B}\|^2_F}{\|{A}^T{B}\|_F}$ captures one difficult situation for our algorithm. If $\|A^TB\|_F$ is much smaller than $\|A\|_F$ or $\|B\|_F$, which could happen, e.g., when many column vectors of $A$ are orthogonal to those of $B$, then \algo requires many samples to work. This is reasonable. Imagine that $A^TB$ is close to an identity matrix, then it may be hard to tell it from an all-zero matrix without enough samples. Nevertheless, removing this dependence is an interesting direction for future research.

{\bf Remark 3.} For the special case of $A=B$, \algo computes a rank-$r$ approximation of matrix $A^TA$ in a single pass. Theorem~\ref{mainTheorem} provides an error bound in spectral norm for the residual matrix $(A^TA)_r-\widehat{A^TA}_r$. Most results in the online PCA literature use Frobenius norm as performance measure. Recently, \cite{onlinePCA} provides an online PCA algorithm with spectral norm guarantee. They achieves a spectral norm bound of $\epsilon \sigma_1^*+\sigma_{r+1}^*$, which is stronger than ours. However, their algorithm requires a target dimension of $O(r\log n / \epsilon^2)$, i.e., the output is a matrix of size $n$-by-$O(r\log n / \epsilon^2)$, while the output of \algo is simply $n$-by-$r$.

{\bf Remark 4.} We defer our proofs to Appendix~\ref{mainProof}. The proof proceeds in three steps. In Appendix~\ref{proofInit}, we show that the sampled matrix provides a good approximation of the actual matrix $A^TB$. In Appendix~\ref{proofDist}, we show that there is a geometric decrease in the distance between the computed subspaces $\widehat{U}$, $\widehat{V}$ and the optimal ones $U^*$, $V^*$ at each iteration of WAltMin algorithm. The spectral norm bound in Theorem~\ref{mainTheorem} is then proved using results from the previous two steps. 

{\bf Computation Complexity.}
We now analyze the computation complexity of \algo. In Step 1, we compute the sketched matrices of $A$ and $B$, which requires $O(\text{nnz}(A)k+\text{nnz}(B)k)$ flops. Here $\text{nnz}(\cdot)$ denotes the number of non-zero entries. The main job of Step 2 is to sample a set $\Omega$ and calculate the corresponding inner products, which takes $O(m\log(n)+mk)$ flops. Here we define $n$ as $\max\{n_1,n_2\}$ for simplicity. According to Eq.~(\ref{dim}), we have $\log(n)=O(k)$, so Step 2 takes $O(mk)$ flops. In Step 3, we run alternating least squares on the sampled matrix, which can be completed in $O((mr^2+nr^3)T)$ flops. Since Eq.~(\ref{sam}) indicates $nr= O(m)$, the computation complexity of Step 3 is $O(mr^2T)$.  Therefore, \algo has a total computation complexity $O(\text{nnz}(A)k+\text{nnz}(B)k+mk+mr^2T)$.

\section{Numerical Experiments}\label{experiments}
{\bf Spark implementation.} We implement our \algo in Apache Spark 1.6.2~\cite{spark}. For the purpose of comparison, we also implement a two-pass algorithm \prevalgo~\cite{leverage} in Spark\footnote{To our best knowledge, this the first distributed implementation of \prevalgo.}. The matrices $A$ and $B$ are stored as a resilient distributed dataset (RDD) in disk (by setting its \verb|StorageLevel| as \verb|DISK_ONLY|). We implement the two passes of \prevalgo using the \verb|treeAggregate| method. For \algo, we choose the subsampled randomized Hadamard transform (SRHT)~\cite{srht} as the sketching matrix~\footnote{Compared to Gaussian sketch, SRHT reduces the runtime from $O(ndk)$ to $O(nd\log d)$ and space cost from $O(dk)$ to $O(d)$, while maintains the same quality of the output.}. The biased sampling procedure is performed using binary search (see Appendix~\ref{sec:app_samp} for how to sample $m$ elements in $O(m\log{n})$ time). After obtaining the sampled matrix, we run \verb|ALS| (alternating least squares) to get the desired low-rank matrices. More details can be found in~\cite{github}.

\begin{figure*}[ht]
\centering
\subfigure[]{
	\includegraphics[width=0.37\textwidth]{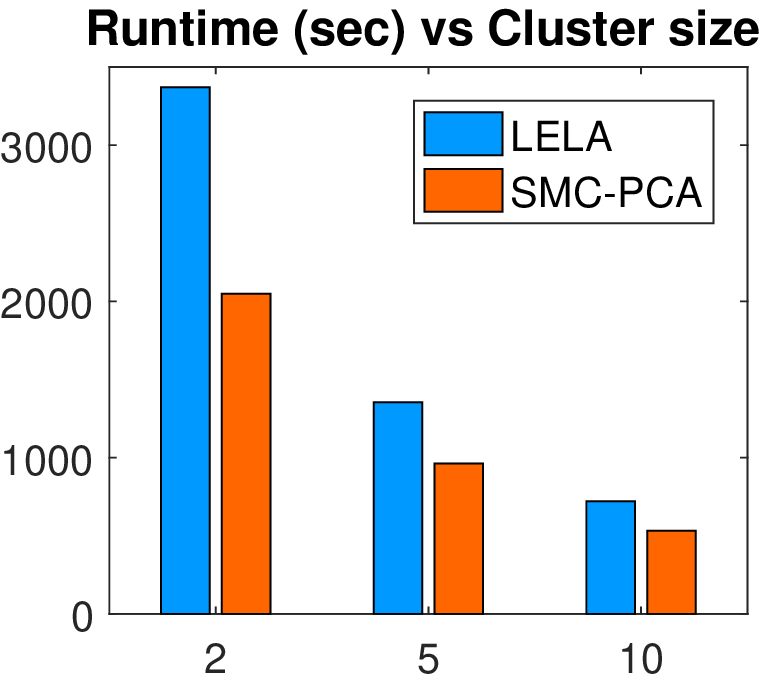}
	\label{runtime}}
\subfigure[]{
	\includegraphics[width=0.59\textwidth]{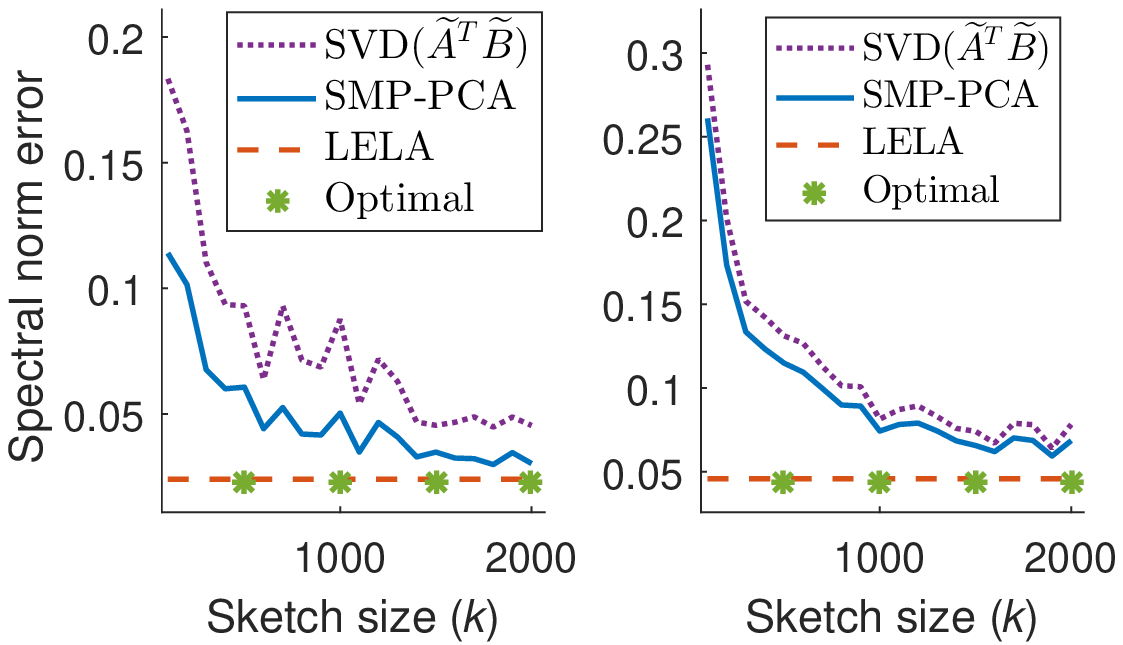}
	\label{siftbag}}
\caption{(a) Spark-1.6.2 running time on a 150GB dataset. All nodes are m.2xlarge EC2 instances. See~\cite{github} for more details. (b) Spectral norm error achieved by three algorithms over two datasets: SIFT10K (left) and NIPS-BW (right). We observe that \algo outperforms $\text{SVD}(\widetilde{A}^T\widetilde{B})$ by a factor of 1.8 for SIFT10K and 1.1 for NIPS-BW. Besides, the error of \algo keeps decreasing as the sketch size $k$ grows.}
\end{figure*}

{\bf Description of datasets.} We test our algorithm on synthetic datasets and three real datasets: SIFT10K~\cite{SIFT}, NIPS-BW~\cite{uci}, and URL-reputation~\cite{URL}. For synthetic data, we generate matrices $A$ and $B$ as $GD$, where $G$ has entries independently drawn from standard Gaussian distribution, and $D$ is a diagonal matrix with $D_{ii} = 1/i$. SIFT10K is a dataset of 10,000 images. Each image is represented by 128 features. We set $A$ as the image-by-feature matrix. The task here is to compute a low rank approximation of $A^TA$, which is a standard PCA task. The NIPS-BW dataset contains bag-of-words features extracted from 1,500 NIPS papers. We divide the papers into two subsets, and let $A$ and $B$ be the corresponding word-by-paper matrices, so $A^TB$ computes the counts of co-occurred words between two sets of papers. The original URL-reputation dataset has 2.4 million URLs. Each URL is represented by 3.2 million features, and is indicated as malicious or benign. This dataset has been used previously for CCA~\cite{url-cca}. Here we extract two subsets of features, and let $A$ and $B$ be the corresponding URL-by-feature matrices. The goal is to compute a low rank approximation of $A^TB$, the cross-covariance matrix between two subsets of features. 

{\bf Sample complexity.} In Figure~\ref{noSamples} we present simulation results on a small synthetic data with $n=d=5,000$ and $r=5$. We observe that a phase transition occurs when the sample complexity $m=\Theta(nr\log{n})$. This agrees with the experimental results shown in previous papers, see, e.g.,~\cite{chen2013completing, leverage}. For the rest experiments presented in this section, unless otherwise specified, we set $r=5$, $T=10$, and sampling complexity $m$ as $4nr\log n$.

\begin{table}[ht]
  \caption{A comparison of spectral norm error over three datasets}
  \label{url}
  \centering
  \begin{tabular}{ c c c c c c}
    \toprule
    Dataset & $d$ & $n$ & Algorithm & Sketch size $k$ & Error \\
    \midrule
    \multirow{3}{4em}{Synthetic} & \multirow{3}{3.5em}{100,000} & \multirow{3}{3.5em}{100,000}  &
    Optimal & - & 0.0271 \\
    & &&\prevalgo & - & 0.0274 \\
    & &&\algo & 2,000  &  0.0280\\
    \midrule
    \multirow{3}{5em}{URL-malicious} & \multirow{3}{4em}{792,145} & \multirow{3}{4em}{10,000} &
    Optimal & - &  0.0163\\
    & &&\prevalgo & - & 0.0182 \\
    & &&\algo & 2,000  & 0.0188 \\
    \midrule
    \multirow{3}{5em}{URL-benign} & \multirow{3}{4em}{1,603,985} & \multirow{3}{4em}{10,000}  &
    Optimal & - &  0.0103\\
    & &&\prevalgo & - & 0.0105\\
    & &&\algo & 2,000  & 0.0117\\
    \bottomrule
  \end{tabular}
\end{table}

{\bf Comparison of \algo and \prevalgo.} We now compare the statistical performance of \algo and \prevalgo~\cite{leverage} on three real datasets and one synthetic dataset. As shown in Figure~\ref{siftbag} and Table~\ref{url}, \prevalgo always achieves a smaller spectral norm error than \algo, which makes sense because \prevalgo takes two passes and hence has more chances exploring the data. Besides, we observe that as the sketch size increases, the error of \algo keeps decreasing and gets closer to that of \prevalgo. 

In Figure~\ref{runtime} we compare the runtime of \algo and \prevalgo using a 150GB synthetic dataset on m3.2xlarge Amazon EC2 instances\footnote{Each machine has 8 cores, 30GB memory, and 2$\times$80GB SSD.}. The matrices $A$ and $B$ have dimension $n=d=100,000$. The sketch dimension is set as $k=2,000$. 
We observe that the speedup achieved by \algo is more prominent for small clusters (e.g., 56 mins versus 34 mins on a cluster of size two). This is possibly due to the increasing spark overheads at larger clusters, see~\cite{overhead} for more related discussion.

{\bf Comparison of \algo and $\text{SVD}(\widetilde{A}^T\widetilde{B})$.} In Figure~\ref{svdOnepass} we repeat the experiment in Section~\ref{algorithms} by generating column vectors of $A$ and $B$ from a cone with angle $\theta$. Here $\text{SVD}(\widetilde{A}^T\widetilde{B})$ refers to computing SVD on the sketched matrices\footnote{This can be done by standard power iteration based method, without explicitly forming the product matrix $\widetilde{A}^T\widetilde{B}$, whose size is too big to fit into memory according to our assumption.}. We plot the ratio of the spectral norm error of $\text{SVD}(\widetilde{A}^T\widetilde{B})$ over that of \algo, as a function of $\theta$. Note that this is different from Figure~\ref{cone}, as now we take the effect of random sampling and SVD into account. However, the trend in both figures are the same: \algo always outperforms $\text{SVD}(\widetilde{A}^T\widetilde{B})$ and can be arbitrarily better as $\theta$ goes to zero.

\begin{figure*}[t]
\centering
\subfigure[]{
	\includegraphics[width=0.31\textwidth]{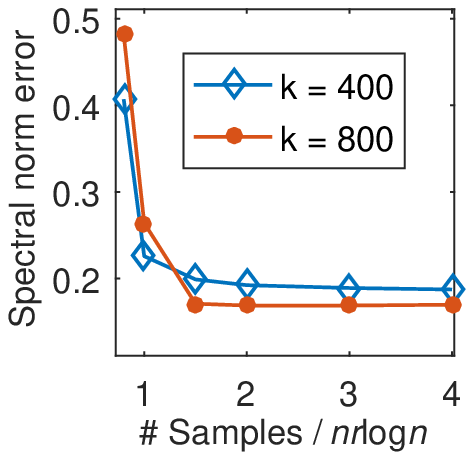}
	\label{noSamples}}
\subfigure[]{
	\includegraphics[width=0.31\textwidth]{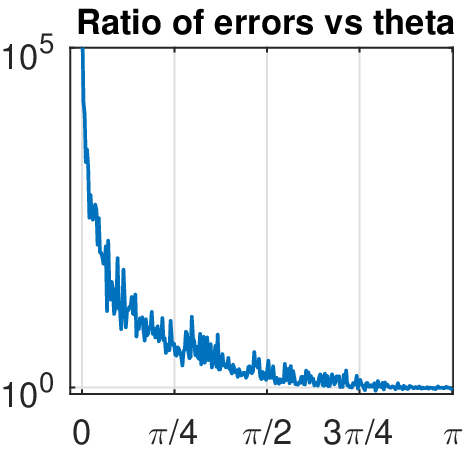}
	\label{svdOnepass}}
\subfigure[]{
	\includegraphics[width=0.31\textwidth]{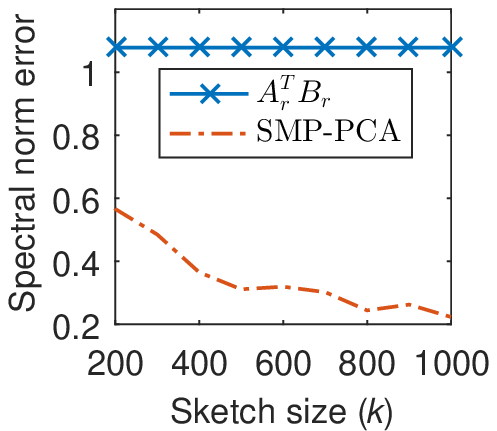}
	\label{svdAsvdB}}
\caption{(a) A phase transition occurs when the sample complexity $m=\Theta(nr\log{n})$. (b) This figure plots the ratio of spectral norm error of $\text{SVD}(\widetilde{A}^T\widetilde{B})$ over that of \algo. The columns of $A$ and $B$ are unit vectors drawn from a cone with angle $\theta$. We see that the ratio of errors scales to infinity as the cone angle shrinks. (c) If the top $r$ left singular vectors of $A$ are orthogonal to those of $B$, the product $A_r^TB_r$ is a very poor low rank approximation of $A^TB$. }
\end{figure*}

In Figure~\ref{siftbag} we compare \algo and $\text{SVD}(\widetilde{A}^T\widetilde{B})$ on two real datasets SIFK10K and NIPS-BW. The y-axis represents spectral norm error, defined as $||A^TB-\widehat{A^TB}_r||/||A^TB||$, where $\widehat{A^TB}_r$ is the rank-$r$ approximation found by a specific algorithm. We observe that \algo outperforms $\text{SVD}(\widetilde{A}^T\widetilde{B})$ by a factor of 1.8 for SIFT10K and 1.1 for NIPS-BW. 

Now we explain why \algo produces a more accurate result than $\text{SVD}(\widetilde{A}^T\widetilde{B})$. The reasons are twofold. First, our rescaled JL embedding $\widetilde{M}$ is a better estimator for $A^TB$ than $\widetilde{A}^T\widetilde{B}$ (Figure~\ref{tildeM}). Second, the noise due to sampling is relatively small compared to the benefit obtained from $\widetilde{M}$, and hence the final result computed using $P_{\Omega}(\widetilde{M})$ still outperforms $\text{SVD}(\widetilde{A}^T\widetilde{B})$. 

{\bf Comparison of \algo and $A_r^TB_r$.} Let $A_r$ and $B_r$ be the optimal rank-$r$ approximation of $A$ and $B$, we show that even if one could use existing methods (e.g., algorithms for streaming PCA) to estimate $A_r$ and $B_r$, their product $A_r^TB_r$ can be a very poor low rank approximation of $A^TB$. This is demonstrated in Figure~\ref{svdAsvdB}, where we intentionally make the top $r$ left singular vectors of $A$ orthogonal to those of $B$. 

\section{Conclusion}
We develop a novel one-pass algorithm~\algo that directly computes a low rank approximation of a matrix product, using ideas of matrix sketching and entrywise sampling. As a subroutine of our algorithm, we propose rescaled JL for estimating entries of $A^T B$, which has smaller error compared to the standard estimator $\tilde{A}^T \tilde{B}$. This we believe can be extended to other applications. Moreover, \algo allows the non-zero entries of $A$ and $B$ to be presented in any arbitrary order, and hence can be used for steaming applications. We design a distributed implementation for \algo. Our experimental results show that \algo can perform arbitrarily better than $\text{SVD}(\widetilde{A}^T\widetilde{B})$, and is significantly faster compared to algorithms that require two or more passes over the data.

{\bf Acknowledgements}
We thank the anonymous reviewers for their valuable comments. This research has been supported by NSF Grants CCF 1344179, 1344364, 1407278, 1422549, 1302435, 1564000, and ARO YIP W911NF-14-1-0258.
\clearpage
\newpage
\bibliographystyle{abbrv}
{\bibliography{references}}

\begin{thebibliography}{10}

\bibitem{achlioptas2001fast}
D.~Achlioptas and F.~McSherry.
\newblock Fast computation of low rank matrix approximations.
\newblock In {\em Proceedings of the thirty-third annual ACM symposium on
  Theory of computing}, pages 611--618. ACM, 2001.

\bibitem{balsubramani2013fast}
A.~Balsubramani, S.~Dasgupta, and Y.~Freund.
\newblock The fast convergence of incremental pca.
\newblock In {\em Advances in Neural Information Processing Systems}, pages
  3174--3182, 2013.

\bibitem{leverage}
S.~Bhojanapalli, P.~Jain, and S.~Sanghavi.
\newblock Tighter low-rank approximation via sampling the leveraged element.
\newblock In {\em Proceedings of the Twenty-Sixth Annual ACM-SIAM Symposium on
  Discrete Algorithms (SODA)}, pages 902--920. SIAM, 2015.

\bibitem{angle}
P.~T. Boufounos.
\newblock Angle-preserving quantized phase embeddings.
\newblock In {\em SPIE Optical Engineering+ Applications}. International
  Society for Optics and Photonics, 2013.

\bibitem{boutsidis2015online}
C.~Boutsidis, D.~Garber, Z.~Karnin, and E.~Liberty.
\newblock Online principal components analysis.
\newblock In {\em Proceedings of the Twenty-Sixth Annual ACM-SIAM Symposium on
  Discrete Algorithms}, pages 887--901. SIAM, 2015.

\bibitem{boutsidis2013improved}
C.~Boutsidis and A.~Gittens.
\newblock Improved matrix algorithms via the subsampled randomized hadamard
  transform.
\newblock {\em SIAM Journal on Matrix Analysis and Applications},
  34(3):1301--1340, 2013.

\bibitem{candes2009exact}
E.~J. Cand{\`e}s and B.~Recht.
\newblock Exact matrix completion via convex optimization.
\newblock {\em Foundations of Computational mathematics}, 9(6):717--772, 2009.

\bibitem{CCA}
X.~Chen, H.~Liu, and J.~G. Carbonell.
\newblock Structured sparse canonical correlation analysis.
\newblock In {\em International Conference on Artificial Intelligence and
  Statistics}, pages 199--207, 2012.

\bibitem{chen2013completing}
Y.~Chen, S.~Bhojanapalli, S.~Sanghavi, and R.~Ward.
\newblock Completing any low-rank matrix, provably.
\newblock {\em arXiv preprint arXiv:1306.2979}, 2013.

\bibitem{clarkson2009numerical}
K.~L. Clarkson and D.~P. Woodruff.
\newblock Numerical linear algebra in the streaming model.
\newblock In {\em Proceedings of the forty-first annual ACM symposium on Theory
  of computing}, pages 205--214. ACM, 2009.

\bibitem{clarkson2013low}
K.~L. Clarkson and D.~P. Woodruff.
\newblock Low rank approximation and regression in input sparsity time.
\newblock In {\em Proceedings of the 45th annual ACM symposium on Symposium on
  theory of computing}, pages 81--90. ACM, 2013.

\bibitem{stablerank}
M.~B. Cohen, J.~Nelson, and D.~P. Woodruff.
\newblock Optimal approximate matrix product in terms of stable rank.
\newblock {\em arXiv preprint arXiv:1507.02268}, 2015.

\bibitem{deshpande2006adaptive}
A.~Deshpande and S.~Vempala.
\newblock Adaptive sampling and fast low-rank matrix approximation.
\newblock In {\em Approximation, Randomization, and Combinatorial Optimization.
  Algorithms and Techniques}, pages 292--303. Springer, 2006.

\bibitem{drineas2006fast}
P.~Drineas, R.~Kannan, and M.~W. Mahoney.
\newblock Fast monte carlo algorithms for matrices ii: Computing a low-rank
  approximation to a matrix.
\newblock {\em SIAM Journal on Computing}, 36(1):158--183, 2006.

\bibitem{drineas2006subspace}
P.~Drineas, M.~W. Mahoney, and S.~Muthukrishnan.
\newblock Subspace sampling and relative-error matrix approximation:
  Column-based methods.
\newblock In {\em Approximation, Randomization, and Combinatorial Optimization.
  Algorithms and Techniques}, pages 316--326. Springer, 2006.

\bibitem{frieze2004fast}
A.~Frieze, R.~Kannan, and S.~Vempala.
\newblock Fast monte-carlo algorithms for finding low-rank approximations.
\newblock {\em Journal of the ACM (JACM)}, 51(6):1025--1041, 2004.

\bibitem{overhead}
A.~Gittens, A.~Devarakonda, E.~Racah, M.~F. Ringenburg, L.~Gerhardt,
  J.~Kottalam, J.~Liu, K.~J. Maschhoff, S.~Canon, J.~Chhugani, P.~Sharma,
  J.~Yang, J.~Demmel, J.~Harrell, V.~Krishnamurthy, M.~W. Mahoney, and Prabhat.
\newblock Matrix factorization at scale: a comparison of scientific data
  analytics in spark and {C+MPI} using three case studies.
\newblock {\em arXiv preprint arXiv:1607.01335}, 2016.

\bibitem{halko2011finding}
N.~Halko, P.-G. Martinsson, and J.~A. Tropp.
\newblock Finding structure with randomness: Probabilistic algorithms for
  constructing approximate matrix decompositions.
\newblock {\em SIAM review}, 53(2):217--288, 2011.

\bibitem{har2006low}
S.~Har-Peled.
\newblock Low rank matrix approximation in linear time.
\newblock {\em Manuscript. http://valis. cs. uiuc.
  edu/sariel/papers/05/lrank/lrank. pdf}, 2006.

\bibitem{SIFT}
H.~Jegou, M.~Douze, and C.~Schmid.
\newblock Product quantization for nearest neighbor search.
\newblock {\em Pattern Analysis and Machine Intelligence, IEEE Transactions
  on}, 33(1):117--128, 2011.

\bibitem{kannan2014principal}
R.~Kannan, S.~S. Vempala, and D.~P. Woodruff.
\newblock Principal component analysis and higher correlations for distributed
  data.
\newblock In {\em Proceedings of The 27th Conference on Learning Theory}, pages
  1040--1057, 2014.

\bibitem{onlinePCA}
Z.~Karnin and E.~Liberty.
\newblock Online pca with spectral bounds.
\newblock In {\em Proceedings of The 28th Conference on Learning Theory
  (COLT)}, volume~40, pages 1129--1140, 2015.

\bibitem{uci}
M.~Lichman.
\newblock {UCI} machine learning repository.
\newblock http://archive.ics.uci.edu/ml, 2013.

\bibitem{URL}
J.~Ma, L.~K. Saul, S.~Savage, and G.~M. Voelker.
\newblock Identifying suspicious urls: an application of large-scale online
  learning.
\newblock In {\em Proceedings of the 26th annual international conference on
  machine learning}, pages 681--688. ACM, 2009.

\bibitem{url-cca}
Z.~Ma, Y.~Lu, and D.~Foster.
\newblock Finding linear structure in large datasets with scalable canonical
  correlation analysis.
\newblock {\em arXiv preprint arXiv:1506.08170}, 2015.

\bibitem{magen2011low}
A.~Magen and A.~Zouzias.
\newblock Low rank matrix-valued chernoff bounds and approximate matrix
  multiplication.
\newblock In {\em Proceedings of the twenty-second annual ACM-SIAM symposium on
  Discrete Algorithms}, pages 1422--1436. SIAM, 2011.

\bibitem{mitliagkas2013memory}
I.~Mitliagkas, C.~Caramanis, and P.~Jain.
\newblock Memory limited, streaming pca.
\newblock In {\em Advances in Neural Information Processing Systems}, pages
  2886--2894, 2013.

\bibitem{nguyen2009fast}
N.~H. Nguyen, T.~T. Do, and T.~D. Tran.
\newblock A fast and efficient algorithm for low-rank approximation of a
  matrix.
\newblock In {\em Proceedings of the 41st annual ACM symposium on Theory of
  computing}, pages 215--224. ACM, 2009.

\bibitem{sarlos}
T.~Sarlos.
\newblock Improved approximation algorithms for large matrices via random
  projections.
\newblock In {\em Foundations of Computer Science, 2006. FOCS'06. 47th Annual
  IEEE Symposium on}, pages 143--152. IEEE, 2006.

\bibitem{shamir2015stochastic}
O.~Shamir.
\newblock A stochastic pca and svd algorithm with an exponential convergence
  rate.
\newblock In {\em Proceedings of the 32nd International Conference on Machine
  Learning (ICML-15)}, pages 144--152, 2015.

\bibitem{terrytao}
T.~Tao.
\newblock 254a, notes 3a: Eigenvalues and sums of hermitian matrices.
\newblock {\em Terence Tao's blog}, 2010.

\bibitem{srht}
J.~A. Tropp.
\newblock Improved analysis of the subsampled randomized hadamard transform.
\newblock {\em Advances in Adaptive Data Analysis}, pages 115--126, 2011.

\bibitem{userFriendly}
J.~A. Tropp.
\newblock User-friendly tail bounds for sums of random matrices.
\newblock {\em Foundations of Computational Mathematics}, 12(4):389--434, 2012.

\bibitem{sketching}
D.~P. Woodruff.
\newblock Sketching as a tool for numerical linear algebra.
\newblock {\em arXiv preprint arXiv:1411.4357}, 2014.

\bibitem{woolfe2008fast}
F.~Woolfe, E.~Liberty, V.~Rokhlin, and M.~Tygert.
\newblock A fast randomized algorithm for the approximation of matrices.
\newblock {\em Applied and Computational Harmonic Analysis}, 25(3):335--366,
  2008.

\bibitem{github}
S.~Wu, S.~Bhojanapalli, S.~Sanghavi, and A.~Dimakis.
\newblock Github repository for "single-pass pca of matrix products".
\newblock \url{https://github.com/wushanshan/MatrixProductPCA}, 2016.

\bibitem{spark}
M.~Zaharia, M.~Chowdhury, T.~Das, A.~Dave, J.~Ma, M.~McCauley, M.~J. Franklin,
  S.~Shenker, and I.~Stoica.
\newblock Resilient distributed datasets: A fault-tolerant abstraction for
  in-memory cluster computing.
\newblock In {\em Proceedings of the 9th USENIX conference on Networked Systems
  Design and Implementation}, 2012.

\end{thebibliography}

\clearpage
\newpage
\appendix
\onecolumn
\section{Weighted alternating minimization}\label{appWAltMin}
Algorithm~\ref{WAltMin} provides a detailed explanation of WAltMin, which follows a standard procedure for matrix completion. We use $R_{\Omega}(A) = w.*P_\Omega(A)$ to denote the Hadamard product between $w$ and $P_\Omega(A)$: $R_{\Omega}(A)(i,j) = w(i,j)*P_\Omega(A)(i,j)$ for $(i,j)\in\Omega$ and 0 otherwise, where $w\in \mathbb{R}^{n_1\times n_2}=1/\hat{q}_{ij}$ is the weight matrix. Similarly we define the matrix $R_{\Omega}^{1/2}(A)$ as $R_{\Omega}^{1/2}(A)(i,j)= \sqrt{w(i,j)}*P_\Omega(A)(i,j)$ for $(i,j)\in\Omega$ and 0 otherwise.

The algorithm contains two parts: initialization (Step 2-6) and weighted alternating minimization (Step 7-10). In the first part, we compute SVD of the weighted sampled matrix $R_{\Omega}(\widetilde{M})$ and then set row $i$ of $U^{(0)}$ to be zero if its norm is larger than a threshold (Step 6). More details of this trim step can be found in~\cite{leverage}. In the second part, the goal is to solve the following non-convex problem by alternating minimization:
\begin{equation}
\min_{U,V} \sum_{(i,j)\in \Omega} w_{ij}(e_i^T UV^T e_j-\widetilde{M}(i,j))^2,
\end{equation}
where $e_i$, $e_j$ are standard base vectors. After running $T$ iterations, the algorithm outputs a rank-$r$ approximation of $\widetilde{M}$ presented in the convenient factored form. 

\begin{algorithm}[ht]
   \caption{WAltMin~\cite{leverage}}
   \label{WAltMin}
\begin{algorithmic}[1]
   \STATE {\bfseries Input:} $P_\Omega(\widetilde{M})\in \mathbb{R}^{n_1\times n_2}$, $\Omega$, $r$, $\hat{q}$, and $T$ 
   \STATE $w_{ij}=1/\hat{q}_{ij}$ when $\hat{q}_{ij} >0$, $0$ else, $\forall i, j$
   \STATE Divide $\Omega$ in $2T+1$ equal uniformly random subsets, i.e., $\Omega=\{\Omega_0, \dots, \Omega_{2T}\}$
   \STATE $R_{\Omega_0}(\widetilde{M}) = w.*P_{\Omega_0}(\widetilde{M})$
   \STATE $U^{(0)} \Sigma^{(0)} (V^{(0)})^T=\text{SVD}(R_{\Omega_0}(\widetilde{M}), r)$ \\
   \STATE Trim $U^{(0)}$ and let $\widehat{U}^{(0)}$ be the output 
   \FOR {$t=0$ to $T-1$}
	\STATE $\Vht= \argmin_{V }\| R_{\Omega_{2t+1}}^{1/2}(\widetilde{M}- \widehat{U}^{(t)} V^T)\|_F^2$
	\STATE $\widehat{U}^{(t+1)}=\argmin_{U}\| R_{\Omega_{2t+2}}^{1/2}(\widetilde{M}-U(\widehat{V}^{(t+1)})^T)\|_F^2$
	\ENDFOR
    \STATE {\bfseries Output:} $\widehat{U}^{(T)}\in \mathbb{R}^{n_1\times r}$ and $\widehat{V}^{(T)}\in \mathbb{R}^{n_2\times r}$. 
\end{algorithmic}
\end{algorithm}

\section{Technical Lemmas}
We will frequently use the following concentration inequality in the proof.
\begin{lemma} (\emph{Matrix Bernstein's Inequality}~\cite{userFriendly}). \label{Bern}
Consider $p$ independent random matrices $X_1, ...., X_p$ in $\mathbb{R}^{n\times n}$, where each matrix has bounded deviation from its mean: 
\begin{displaymath}
||X_i-\mathbb{E}[X_i]||\le L, \quad \forall i.
\end{displaymath} 
Let the norm of the covariance matrix be 
\begin{displaymath}
\sigma^2 = \max\left\{ \left|\left| \mathbb{E}\left[ \sum_{i=1}^p (X_i -\mathbb{E}[X_i])(X_i -\mathbb{E}[X_i])^T\right] \right|\right|,   \left|\left|  \mathbb{E}\left[ \sum_{i=1}^p (X_i -\mathbb{E}[X_i])^T(X_i -\mathbb{E}[X_i])\right] \right|\right| \right\}
\end{displaymath}
Then the following holds for all $t\ge 0$:
\begin{displaymath}
\mathbb{P}\left[ \left|\left| \sum_{i=1}^p (X_i -\mathbb{E}[X_i])  \right|\right| \right] \le 2n \exp(\frac{-t^2/2}{\sigma^2+Lt/3}).
\end{displaymath}
\end{lemma}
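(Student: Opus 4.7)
The statement is the standard matrix Bernstein inequality of Ahlswede--Winter / Tropp (the display on the left-hand side is missing the event ``$\ge t$'', an obvious typo). My plan is the four-step ``matrix Laplace transform + Lieb concavity + operator-valued moment bound + optimize $\theta$'' pipeline that Tropp popularized. First I would center and Hermitize by dilation so that the target object becomes $\lambda_{\max}$ of a sum of independent zero-mean Hermitian matrices; then invoke the matrix Markov inequality to reduce to controlling $\mathbb{E}\,\trace\exp(\theta W)$; then bound that trace exponential via Lieb's theorem together with a semidefinite bound on each matrix cumulant generating function; and finally optimize in $\theta$.

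\textbf{Reduction and Laplace transform.} Let $Z_i := X_i - \mathbb{E}[X_i]$, so $\|Z_i\|\le L$ and $\mathbb{E}[Z_i]=0$. Because the $Z_i$ need not be Hermitian, pass to the Hermitian dilation $W_i := \begin{pmatrix} 0 & Z_i \\ Z_i^T & 0 \end{pmatrix}\in\mathbb{R}^{2n\times 2n}$. Standard facts about the dilation give $\|W_i\|=\|Z_i\|\le L$, $W_i^2=\operatorname{diag}(Z_iZ_i^T,\,Z_i^T Z_i)$, and $\|\sum_i Z_i\|=\lambda_{\max}(\sum_i W_i)$, so the $\sigma^2$ of the lemma equals $\|\mathbb{E}\sum_i W_i^2\|$ and the problem reduces to the Hermitian case in dimension $2n$. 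Writing $W:=\sum_i W_i$, the matrix Markov inequality gives, for any $\theta>0$,
\begin{equation*}
\mathbb{P}\bigl(\lambda_{\max}(W)\ge t\bigr)\;\le\; e^{-\theta t}\,\mathbb{E}\,\trace\exp(\theta W),
\end{equation*}
using that $\lambda_{\max}(W)\ge t$ forces $\lambda_{\max}(\exp(\theta W))\ge e^{\theta t}$ and that $\lambda_{\max}\le\trace$ on PSD matrices.

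\textbf{Lieb subadditivity and cumulant bound.} The key analytic tool is Lieb's concavity theorem: $A\mapsto\trace\exp(H+\log A)$ is concave on PSD $A$. Applied iteratively with Jensen's inequality in each independent $W_i$, it yields the master subadditivity bound
\begin{equation*}
\mathbb{E}\,\trace\exp(\theta W)\;\le\;\trace\exp\Bigl(\sum_i \log \mathbb{E}\,e^{\theta W_i}\Bigr).
\end{equation*}
For each factor, I would Taylor-expand $e^{\theta W_i}=I+\theta W_i+\sum_{k\ge 2}(\theta W_i)^k/k!$ and use that for $|x|\le L$ the scalar function $(e^{\theta x}-1-\theta x)/x^2$ is bounded by $g(\theta):=(\theta^{2}/2)/(1-L\theta/3)$ when $0<\theta<3/L$. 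Transferring to the spectral calculus and taking expectations gives the semidefinite bounds
\begin{equation*}
\mathbb{E}\,e^{\theta W_i}\;\preceq\;I+g(\theta)\,\mathbb{E}[W_i^2],\qquad \log\mathbb{E}\,e^{\theta W_i}\;\preceq\;g(\theta)\,\mathbb{E}[W_i^2].
\end{equation*}
Summing, using $\trace\exp(\cdot)\le 2n\cdot\exp(\lambda_{\max}(\cdot))$, and plugging back into the Markov step yields $\mathbb{P}(\lambda_{\max}(W)\ge t)\le 2n\exp\bigl(g(\theta)\sigma^{2}-\theta t\bigr)$. Choosing $\theta=t/(\sigma^{2}+Lt/3)$ balances the two terms and produces precisely the exponent $-(t^{2}/2)/(\sigma^{2}+Lt/3)$, completing the proof.

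\textbf{Main obstacle.} The non-elementary ingredient is Lieb's concavity theorem; in a self-contained write-up I would either cite it as a black box or derive it from Epstein's joint concavity of $(A,B)\mapsto\trace(A^{p}B^{1-p})$, and this is where any ``first-principles'' proof does real work. The other delicate point is lifting the scalar inequality $e^{\theta x}\le 1+\theta x+g(\theta)x^{2}$ on $[-L,L]$ to the PSD ordering $e^{\theta W_i}\preceq I+\theta W_i+g(\theta)W_i^{2}$; this requires the transfer principle for functions applied via the spectral theorem, followed by commuting expectation with the PSD order. Everything else---the dilation identities, the Markov step, the reduction from $\lambda_{\max}$ back to operator norm (already handled by the dilation), and the one-dimensional optimization in $\theta$---is routine.
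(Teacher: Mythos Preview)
The paper does not prove this lemma at all: it is stated as a technical tool and cited directly from Tropp's user-friendly matrix concentration survey, with no argument given. Your proposal is the standard Tropp proof (matrix Laplace transform, Hermitian dilation for the rectangular case, Lieb's concavity to obtain the subadditivity of matrix cgfs, the scalar Bernstein mgf bound transferred by the spectral calculus, and optimization in $\theta$), which is precisely the argument in the cited reference; so your approach is correct and is exactly the one the paper is implicitly invoking.
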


A formal definition of JL transform is given below \cite{sarlos}\cite{sketching}. 
\begin{definition}
A random matrix $\Pi \in \mathbb{R}^{k\times d}$ forms a JL transform with parameters $\epsilon, \delta, f$ or JLT($\epsilon, \delta, f$) for short, if with probability at least $1-\delta$, for any $f$-element subset $V\subset \mathbb{R}^d$, for all $v,v'\in V$ it holds that $|\langle\Pi v, \Pi v' \rangle-\langle v, v' \rangle| \le \epsilon ||v||\cdot||v'||$.\label{JLT}
\end{definition}

The following lemma \cite{sketching} characterizes the tradeoff between the reduced dimension $k$ and the error level $\epsilon$.
\begin{lemma}
Let $0<\epsilon, \delta<1$, and $\Pi\in \mathbb{R}^{k\times d}$ be a random matrix where the entries $\Pi(i,j)$ are i.i.d. $\mathcal{N}(0, 1/k)$ random variables. If $k=\Omega(\log(f/\delta)\epsilon^{-2})$, then $\Pi$ is a JLT($\epsilon, \delta, f$). \label{randomJLT}
\end{lemma}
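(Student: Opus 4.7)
The plan is to reduce inner product preservation to a norm preservation statement via the polarization identity, and then establish norm preservation for a single fixed vector using standard chi-squared concentration, finally taking a union bound over the finite pool of vectors that arises from polarization. This is the textbook route to the distributional JL lemma.

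First I would fix any $w\in\mathbb{R}^d$ and observe that, since the entries of $\Pi$ are i.i.d.\ $\mathcal{N}(0,1/k)$, each row $\Pi_i$ satisfies $\langle \Pi_i,w\rangle\sim\mathcal{N}(0,\|w\|^2/k)$, whence
\[
\|\Pi w\|^2 \;=\; \sum_{i=1}^k \langle \Pi_i,w\rangle^2 \;=\; \frac{\|w\|^2}{k}\,Z,\qquad Z\sim\chi^2_k.
\]
Using either the explicit MGF of $\chi^2_k$ (namely $\mathbb{E}[e^{\lambda Z}]=(1-2\lambda)^{-k/2}$) or Bernstein's inequality applied to the sub-exponential summands $\langle \Pi_i,w\rangle^2-\|w\|^2/k$, I obtain, for every $\epsilon'\in(0,1)$,
\[
\Pr\!\left[\bigl|\|\Pi w\|^2-\|w\|^2\bigr|>\epsilon'\|w\|^2\right] \;\le\; 2\exp\!\bigl(-c k(\epsilon')^2\bigr)
\]
for an absolute constant $c>0$.

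Next I would apply the polarization identity
\[
\langle \Pi u,\Pi v\rangle-\langle u,v\rangle \;=\; \tfrac14\bigl(\|\Pi(u+v)\|^2-\|u+v\|^2\bigr)-\tfrac14\bigl(\|\Pi(u-v)\|^2-\|u-v\|^2\bigr).
\]
If the norm-preservation event of the previous step holds with parameter $\epsilon'=\epsilon$ simultaneously for $u+v$ and $u-v$, then the parallelogram identity $\|u+v\|^2+\|u-v\|^2=2(\|u\|^2+\|v\|^2)\le 4\|u\|\|v\|$ (after scaling $u,v$ to unit norm, using homogeneity) yields $\bigl|\langle \Pi u,\Pi v\rangle-\langle u,v\rangle\bigr|\le \epsilon\|u\|\|v\|$. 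Finally I would instantiate this for all pairs from the $f$-element set $V$: the relevant vectors lie in $W:=\{u\pm v:u,v\in V\}$, a set of size at most $f^2$, so a union bound over $W$ combined with Step~1 gives failure probability at most $2f^2\exp(-ck\epsilon^2)$. Requiring this to be $\le\delta$ gives $k\ge C\log(f/\delta)\,\epsilon^{-2}$ for a suitable constant $C$, which is exactly the stated dimension bound.

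The only nontrivial ingredient is the chi-squared tail bound in Step~1; polarization and the union bound are routine. I expect the only place that requires some care is bookkeeping the constants so that the $\epsilon$ appearing in Definition~\ref{JLT} matches, after the factor of $4$ from polarization, the $\epsilon'$ plugged into the concentration inequality.
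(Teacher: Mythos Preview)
Your argument is correct and is the standard route to the distributional Johnson--Lindenstrauss lemma. The paper itself does not prove this lemma at all: it simply states it with a citation to the sketching survey~\cite{sketching} and uses it as a black box. So there is no ``paper's own proof'' to compare against; you have supplied what the paper omits.

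One small point of presentation: the chain $\|u+v\|^2+\|u-v\|^2=2(\|u\|^2+\|v\|^2)\le 4\|u\|\|v\|$ is false as an inequality for general $u,v$ (AM--GM gives the reverse direction). What you actually need, and what your parenthetical remark indicates you intend, is to first normalize $u,v$ to unit vectors by homogeneity, at which point $2(\|u\|^2+\|v\|^2)=4=4\|u\|\|v\|$ with equality. It would be cleaner to state the argument directly for unit $u,v$ and then invoke homogeneity at the end, rather than writing an inequality that only holds in the special case.
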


We now present two lemmas that connect $\widetilde{A}\in \mathbb{R}^{k\times n}$ and $\widetilde{B}\in \mathbb{R}^{d\times n}$ with $A\in\mathbb{R}^{d\times n}$ and $B\in\mathbb{R}^{d\times n}$. 
\begin{lemma} \label{JL-AB}
Let $0<\epsilon, \delta<1$, if $k=\Omega(\frac{\log(2n/\delta)}{\epsilon^2})$, then with probability at least $1-\delta$,
\begin{displaymath}
(1-\epsilon)||A||^2_F \le ||\widetilde{A}||^2_F \le (1+\epsilon) ||A||^2_F, \quad (1-\epsilon)||B||^2_F \le ||\widetilde{B}||^2_F \le (1+\epsilon) ||B||^2_F,
\end{displaymath}
\begin{displaymath}
||\widetilde{A}^T\widetilde{B} - A^TB||_F \le \epsilon ||A||_F ||B||_F.
\end{displaymath} 
\end{lemma}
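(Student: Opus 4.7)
The entire statement follows from a single application of the JL transform guarantee (Lemma~\ref{randomJLT}) to a carefully chosen finite set of vectors, together with a union bound over all pairwise inner products we need to control. Specifically, the plan is to take the $2n$-element set
\[
V \;=\; \{A_1,\ldots,A_n,\, B_1,\ldots,B_n\} \;\subset\; \mathbb{R}^d,
\]
and apply Lemma~\ref{randomJLT} with parameters $(\epsilon,\delta,f)$ where $f=2n$. This gives that for $k=\Omega(\log(2n/\delta)/\epsilon^2)$, the random matrix $\Pi$ is a JLT$(\epsilon,\delta,2n)$, so with probability at least $1-\delta$,
\[
\bigl|\langle \Pi u, \Pi v\rangle - \langle u,v\rangle\bigr| \;\le\; \epsilon \,\|u\|\,\|v\|, \qquad \forall\, u,v \in V.
\]

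\textbf{Frobenius norm preservation of $\widetilde{A}$ and $\widetilde{B}$.} Setting $u=v=A_i$ in the above display yields $\bigl|\,\|\widetilde{A}_i\|^2 - \|A_i\|^2\,\bigr| \le \epsilon\|A_i\|^2$ for every $i\in[n]$. Summing over $i$ and using $\|\widetilde{A}\|_F^2 = \sum_i \|\widetilde{A}_i\|^2$ and $\|A\|_F^2 = \sum_i \|A_i\|^2$ gives the two-sided bound $(1-\epsilon)\|A\|_F^2 \le \|\widetilde{A}\|_F^2 \le (1+\epsilon)\|A\|_F^2$. The same argument applied to $B_j$'s yields the analogous bound for $\widetilde{B}$.

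\textbf{Frobenius error of the sketched product.} For every pair $(i,j) \in [n]\times[n]$, setting $u = A_i$ and $v = B_j$ in the JLT inequality gives
\[
\bigl(\widetilde{A}_i^T \widetilde{B}_j - A_i^T B_j\bigr)^2 \;\le\; \epsilon^2 \,\|A_i\|^2\,\|B_j\|^2.
\]
Summing over $i,j$ and using the factorization $\sum_{i,j}\|A_i\|^2\|B_j\|^2 = \|A\|_F^2\,\|B\|_F^2$, we obtain
\[
\|\widetilde{A}^T\widetilde{B} - A^TB\|_F^2 \;\le\; \epsilon^2\,\|A\|_F^2\,\|B\|_F^2,
\]
and taking square roots yields the desired bound. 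All three conclusions hold simultaneously on the same high-probability event produced by Lemma~\ref{randomJLT}, so no additional union bound is needed.

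\textbf{Main obstacle.} There is no serious obstacle: the proof is essentially bookkeeping on top of the off-the-shelf JLT guarantee. The only subtlety worth noting is to make sure the set $V$ is chosen large enough to simultaneously cover (a) the self-pairs $(A_i,A_i)$ and $(B_j,B_j)$ needed for the norm bounds, and (b) the cross-pairs $(A_i,B_j)$ needed for the product bound; pooling both column sets into $V$ of size $2n$ handles this at the cost of only a $\log 2$ factor in the requirement on $k$.
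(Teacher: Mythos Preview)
Your proposal is correct and is essentially the same approach the paper has in mind: the paper's own proof simply defers to the standard JL transform results (citing \cite{sketching} and \cite{sarlos}), and the explicit argument you wrote out---applying the JLT guarantee of Lemma~\ref{randomJLT} to the $2n$ columns and summing the entrywise bounds---is exactly the standard argument behind those references (indeed, the paper carries out the identical sum-and-square-root step in its proof of the next lemma, Lemma~\ref{JL}).
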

\begin{proof}
This is again a standard result of JL transformation, e.g., see Definition 2.3 and Theorem 2.1 of \cite{sketching} and Lemma 6 of \cite{sarlos} .
\end{proof}

\begin{lemma} \label{stablerank-AB}
Let $0<\epsilon, \delta<1$, if $k = \Theta (\frac{\tilde{r}+\log(1/\delta)}{\epsilon^2})$, where $\tilde{r} = \max\{\frac{||A||^2_F}{||A||^2}, \frac{||B||^2_F}{||B||^2}\} $ is the maximum stable rank, then with probability at least $1-\delta$,
\begin{displaymath}
||\widetilde{A}^T\widetilde{B} - A^TB|| \le \epsilon ||A|| ||B||.
\end{displaymath}
\end{lemma}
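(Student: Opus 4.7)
By the homogeneity of both sides of the claimed bound in rescalings of $A$ and $B$, I would first normalize so that $\|A\|=\|B\|=1$, which forces $\|A\|_F^2\le \tilde{r}$ and $\|B\|_F^2\le \tilde{r}$. The error to control can then be rewritten using the sketching identity
\begin{equation*}
\widetilde{A}^T\widetilde{B}-A^TB \;=\; A^T\bigl(\Pi^T\Pi-I\bigr)B,
\end{equation*}
so the task reduces to showing $\|A^T(\Pi^T\Pi-I)B\|\le\epsilon$ with probability $\ge 1-\delta$, which is the standard spectral-norm approximate matrix multiplication (AMM) guarantee for oblivious sketches.

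The natural route is a dyadic head/tail split at the stable-rank scale. Let $s=\lceil\tilde{r}\rceil$ and write $A=A_s+A_{>s}$, $B=B_s+B_{>s}$, where $A_s,B_s$ are the rank-$s$ truncations. Expanding bilinearly yields four cross terms. For the head/head piece $A_s^T(\Pi^T\Pi-I)B_s$, both factors live in fixed subspaces of dimension at most $s$, so the standard oblivious subspace embedding guarantee for Gaussian $\Pi$ with $k=\Omega((s+\log(1/\delta))/\epsilon^2)$ rows (Lemma~\ref{randomJLT} applied to a net of the product subspace, as in~\cite{sarlos,sketching}) controls it in spectral norm by $\epsilon\|A_s\|\|B_s\|\le\epsilon$. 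For the tail/tail piece $A_{>s}^T(\Pi^T\Pi-I)B_{>s}$, the Frobenius AMM of Lemma~\ref{JL-AB} (extended entrywise) gives
\begin{equation*}
\bigl\|A_{>s}^T(\Pi^T\Pi-I)B_{>s}\bigr\|\le\bigl\|A_{>s}^T(\Pi^T\Pi-I)B_{>s}\bigr\|_F\le\epsilon\|A_{>s}\|_F\|B_{>s}\|_F\le\epsilon,
\end{equation*}
using $\|A_{>s}\|_F,\|B_{>s}\|_F\le\sqrt{\tilde{r}}\cdot\tfrac{1}{\sqrt{s}}\le 1$ after the normalization. Finally, the two mixed terms are handled by a ``one-sided'' AMM that combines the spectral bound on the head factor with a Frobenius bound on the tail factor, again following from the OSE property applied against a net for the low-dimensional head side. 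A union bound over the four events and a constant rescaling of $\epsilon$ then yields the claim.

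The main obstacle is the mixed term $A_s^T(\Pi^T\Pi-I)B_{>s}$ (and its transpose), because it does not fit cleanly into either the fully-Frobenius AMM of Lemma~\ref{JL-AB} or the pure subspace-embedding inequality: the tail factor has unbounded rank but small Frobenius norm, while the head factor has small spectral norm. Rather than reprove this one-sided AMM from scratch via matrix Bernstein (Lemma~\ref{Bern}), I would cite the main theorem of Cohen, Nelson, and Woodruff~\cite{stablerank}, which precisely packages all four contributions into a single statement: any oblivious $\epsilon$-subspace embedding with $k=\Omega((\tilde{r}+\log(1/\delta))/\epsilon^2)$ rows (in particular, the Gaussian $\Pi$ used here) satisfies the spectral-norm AMM $\|A^T\Pi^T\Pi B-A^TB\|\le\epsilon\|A\|\|B\|$ whenever $\max\{sr(A),sr(B)\}\le\tilde{r}$. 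This collapses the four-term decomposition above into a direct invocation and gives the stated dependence of $k$ on $\tilde{r}$, $\epsilon$, and $\delta$.
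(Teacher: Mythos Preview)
Your proposal is correct and ultimately takes exactly the same approach as the paper: the paper's entire proof of this lemma is the single sentence ``This follows from a recent paper~\cite{stablerank},'' and your proposal, after sketching the head/tail intuition, likewise collapses to a direct invocation of the Cohen--Nelson--Woodruff spectral-norm AMM theorem. (Your intermediate tail bound $\|A_{>s}\|_F\le\sqrt{\tilde{r}}/\sqrt{s}$ is not quite right as stated, but since you defer the actual argument to~\cite{stablerank} this is moot.)
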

\begin{proof}
This follows from a recent paper \cite{stablerank}.
\end{proof}

Using the above two lemmas, we can prove the following two lemmas that relate $\widetilde{M}$ with $A^TB$, for $\widetilde{M}$ defined in Algorithm~\ref{OnePassLELA}. A more compact definition of $\widetilde{M}$ is $D_A\widetilde{A}^T\widetilde{B}D_B$, where $D_A $ and $D_B$ are diagonal matrices with $(D_A)_{ii}= ||A_i||/||\widetilde{A}_i||$ and $(D_B)_{jj} = ||B_j||/||\widetilde{B}_j||$. 

\begin{lemma} \label{JL}
Let $0<\epsilon<1/14$, $0<\delta<1$, if $k=\Omega(\frac{\log(2n/\delta)}{\epsilon^2})$, then with probability at least $1-\delta$,
\begin{displaymath}
|\widetilde{M}_{ij} - A_i^TB_j|\le \epsilon ||A_i||\cdot ||B_j||,\quad ||\widetilde{M} - A^TB||_F \le \epsilon ||A||_F ||B||_F.
\end{displaymath} 
\end{lemma}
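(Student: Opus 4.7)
\medskip
\noindent\textbf{Proof plan.} The plan is to turn the lemma into a statement about an oblivious Johnson--Lindenstrauss embedding applied to the finite vector set $V=\{A_1,\dots,A_n,B_1,\dots,B_n\}\subset \mathbb{R}^d$, which has $|V|=2n$. By Definition~\ref{JLT} and Lemma~\ref{randomJLT}, choosing $k=\Omega(\log(2n/\delta)/\epsilon_0^2)$ with a sufficiently small rescaled accuracy $\epsilon_0\asymp\epsilon$ (to be fixed at the end) guarantees that, with probability at least $1-\delta$, the event
\[
\bigl|\langle \Pi v,\Pi v'\rangle - \langle v,v'\rangle\bigr|\le \epsilon_0\,\|v\|\,\|v'\|
\]
holds simultaneously for every pair $v,v'\in V$. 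Specializing $v=v'=A_i$ and $v=v'=B_j$ gives $\|\widetilde A_i\|^2\in[(1-\epsilon_0)\|A_i\|^2,(1+\epsilon_0)\|A_i\|^2]$ and analogously for $\widetilde B_j$; taking $v=A_i$, $v'=B_j$ gives $|\widetilde A_i^T\widetilde B_j-A_i^TB_j|\le \epsilon_0\|A_i\|\|B_j\|$. These are the only probabilistic ingredients.

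Next, I would do a straightforward algebraic decomposition of the rescaled-JL estimator. Writing $r_{ij}:=\frac{\|A_i\|\|B_j\|}{\|\widetilde A_i\|\|\widetilde B_j\|}$, the estimator is $\widetilde M_{ij}=r_{ij}\,\widetilde A_i^T\widetilde B_j$, so I split
\[
\widetilde M_{ij}-A_i^TB_j \;=\; r_{ij}\bigl(\widetilde A_i^T\widetilde B_j-A_i^TB_j\bigr) \;+\;(r_{ij}-1)\,A_i^TB_j.
\]
The norm bounds from the first step give $r_{ij}\in\bigl[\tfrac{1}{1+\epsilon_0},\tfrac{1}{1-\epsilon_0}\bigr]$, so $|r_{ij}-1|\le \tfrac{\epsilon_0}{1-\epsilon_0}$; combined with $|A_i^TB_j|\le \|A_i\|\|B_j\|$ (Cauchy--Schwarz) and the inner-product bound from the JL step, each of the two summands is at most $\tfrac{\epsilon_0}{1-\epsilon_0}\|A_i\|\|B_j\|$. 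Therefore
\[
\bigl|\widetilde M_{ij}-A_i^TB_j\bigr|\;\le\;\frac{2\epsilon_0}{1-\epsilon_0}\,\|A_i\|\,\|B_j\|.
\]

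Finally, to absorb the $\tfrac{2\epsilon_0}{1-\epsilon_0}$ factor into the target $\epsilon$, I choose $\epsilon_0$ so that $\tfrac{2\epsilon_0}{1-\epsilon_0}\le \epsilon$ (say $\epsilon_0=\epsilon/3$, which is valid for $\epsilon<1/14$ and keeps $k=\Omega(\log(2n/\delta)/\epsilon^2)$). This yields the entrywise bound $|\widetilde M_{ij}-A_i^TB_j|\le \epsilon\|A_i\|\|B_j\|$. The Frobenius bound follows immediately by summing the squared entrywise bounds:
\[
\|\widetilde M-A^TB\|_F^2 \;\le\; \epsilon^2\sum_{i,j}\|A_i\|^2\|B_j\|^2 \;=\;\epsilon^2\,\|A\|_F^2\,\|B\|_F^2.
\]

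The proof has no real obstacle: the hypothesis on $k$ gives the JL event, the rest is elementary algebra. The only mild care is in tracking the rescaling between the ``internal'' accuracy $\epsilon_0$ used in the JL invocation and the ``external'' accuracy $\epsilon$ appearing in the statement, so that the $1/(1-\epsilon_0)$ factor coming from the norm ratios is controlled; this is precisely what forces a small constant restriction on $\epsilon$ and explains the hypothesis $\epsilon<1/14$.
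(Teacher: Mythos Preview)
Your proposal is correct and follows essentially the same approach as the paper: invoke the JL guarantee on the $2n$ column vectors to control both norms and cross inner products simultaneously, then do elementary algebra to bound $|\widetilde{M}_{ij}-A_i^TB_j|$ in terms of a constant times $\epsilon_0\|A_i\|\|B_j\|$, rescale $\epsilon_0$, and sum squares for the Frobenius bound. The only cosmetic difference is the algebraic decomposition: the paper writes $\widetilde{M}_{ij}=(D_A)_{ii}(D_B)_{jj}\widetilde A_i^T\widetilde B_j$ and bounds via the extreme values $(1\pm\epsilon)^2$ of the scalar factor, arriving at $7\epsilon$ before rescaling, whereas your split $r_{ij}(\widetilde A_i^T\widetilde B_j-A_i^TB_j)+(r_{ij}-1)A_i^TB_j$ gives $\tfrac{2\epsilon_0}{1-\epsilon_0}$ and hence a slightly tighter constant; both are the same argument.
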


\begin{proof}
Let $0<\epsilon<1/2$, $0<\delta<1$, according to the Definition~\ref{JLT} and Lemma~\ref{randomJLT}, we have that if $k=\Omega(\frac{\log(2n/\delta)}{\epsilon^2})$, then with probability at least $1-\delta$, and for all $i,j$
\begin{equation}
1-\epsilon \le (D_A)_{ii}\le 1+\epsilon, \quad 1-\epsilon \le (D_B)_{jj} \le 1+\epsilon, \quad |\widetilde{A}_i^T \widetilde{B}_j - A_i^TB_j| \le \epsilon ||A_i||||B_j||. \label{JL-sp}
\end{equation}
We can now bound $|\widetilde{M}_{ij} - A_i^TB_j|$ as
\begin{align}
&\quad\;|\widetilde{M}_{ij} - A_i^TB_j| \nonumber\\
& \overset{\xi_1}= |\widetilde{A}_i^T \widetilde{B}_j (D_A)_{ii} (D_B)_{jj} - A_i^TB_j| \nonumber\\
& \overset{\xi_2}\le \max\{|\widetilde{A}_i^T \widetilde{B}_j (1+\epsilon)^2 - A_i^TB_j|,|\widetilde{A}_i^T \widetilde{B}_j (1-\epsilon)^2 - A_i^TB_j| \} \nonumber\\
&\overset{\xi_3} \le \max\{(1+\epsilon)^2 \epsilon ||A_i||||B_j|| + ((1+\epsilon)^2-1) |A_i^TB_j|,(1-\epsilon)^2 \epsilon ||A_i||||B_j|| + (1-(1-\epsilon)^2) |A_i^TB_j| \} \nonumber\\
&\overset{\xi_4} \le 7\epsilon ||A_i||||B_j||,
\end{align}
where $\xi_1$ follows from the definition of $\widetilde{M}_{ij}$, $\xi_2$ follows from the bound in Eq.(\ref{JL-sp}), $\xi_3$ follows from triangle inequality and Eq.(\ref{JL-sp}), and $\xi_4$ follows from $|A_i^TB_j|\le ||A_i||||B_j||$. Now rescaling $\epsilon$ as $\epsilon/7$ gives the desired bound in Lemma~\ref{JL}.

Hence, $||\widetilde{M} - A^TB||_F = \sqrt{\sum_{ij} |\widetilde{M}_{ij} - A_i^TB_j|^2}\le \sqrt{\sum_{ij}\epsilon^2 ||A_i||^2 ||B_j||^2} =  \epsilon ||A||_F ||B||_F$.
\end{proof}

\begin{lemma} \label{stablerank}
Let $0<\epsilon<1/14$, $0<\delta<1$, if $k = \Omega(\frac{\tilde{r}+\log(n/\delta)}{\epsilon^2})$, then with probability at least $1-\delta$,
\begin{displaymath}
||\widetilde{M} - A^TB|| \le \epsilon ||A|| ||B||.
\end{displaymath}
\end{lemma}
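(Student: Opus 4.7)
The plan is to reduce Lemma~\ref{stablerank} to Lemma~\ref{stablerank-AB} by showing that replacing $\widetilde{A}^T\widetilde{B}$ with the rescaled version $\widetilde{M} = D_A \widetilde{A}^T\widetilde{B} D_B$ introduces only a lower-order perturbation, because the diagonal rescaling matrices $D_A, D_B$ are close to the identity whenever the sketch dimension $k$ is large enough to preserve column norms.

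First I would perform the algebraic decomposition
\begin{equation*}
\widetilde{M} - A^TB = (D_A - I)\,\widetilde{A}^T\widetilde{B}\, D_B \;+\; \widetilde{A}^T\widetilde{B}\,(D_B - I) \;+\; \bigl(\widetilde{A}^T\widetilde{B} - A^TB\bigr),
\end{equation*}
obtained by adding and subtracting $\widetilde{A}^T\widetilde{B} D_B$ and then $\widetilde{A}^T\widetilde{B}$. Applying the triangle inequality and submultiplicativity of the spectral norm, I reduce the task to bounding four quantities: $\|D_A - I\|$, $\|D_B - I\|$, $\|D_B\|$, $\|\widetilde{A}^T\widetilde{B}\|$, and $\|\widetilde{A}^T\widetilde{B} - A^TB\|$.

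Next I would invoke Lemma~\ref{randomJLT} (with a union bound over all $2n$ columns of $A$ and $B$) to get that for $k = \Omega(\log(n/\delta)/\epsilon^2)$, with probability $1-\delta/2$,
\begin{equation*}
(1-\epsilon)\|A_i\|^2 \le \|\widetilde{A}_i\|^2 \le (1+\epsilon)\|A_i\|^2, \qquad (1-\epsilon)\|B_j\|^2 \le \|\widetilde{B}_j\|^2 \le (1+\epsilon)\|B_j\|^2,
\end{equation*}
for every $i, j$. This immediately yields $\|D_A - I\|, \|D_B - I\| \le O(\epsilon)$ and $\|D_A\|, \|D_B\| \le 1+O(\epsilon)$, because $D_A, D_B$ are diagonal with entries $\|A_i\|/\|\widetilde{A}_i\|$ and $\|B_j\|/\|\widetilde{B}_j\|$. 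Then I would apply Lemma~\ref{stablerank-AB} with the same $\epsilon$, using $k = \Omega((\tilde{r} + \log(1/\delta))/\epsilon^2)$, to obtain $\|\widetilde{A}^T\widetilde{B} - A^TB\| \le \epsilon \|A\|\|B\|$ with probability $1-\delta/2$. This in turn gives $\|\widetilde{A}^T\widetilde{B}\| \le (1+\epsilon)\|A\|\|B\|$ by the reverse triangle inequality.

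Finally I would combine these estimates via the union bound and plug them into the decomposition to get, after absorbing all constants,
\begin{equation*}
\|\widetilde{M} - A^TB\| \;\le\; \epsilon(1+\epsilon)^2 \|A\|\|B\| + \epsilon(1+\epsilon)\|A\|\|B\| + \epsilon\|A\|\|B\| \;\le\; C\epsilon \|A\|\|B\|
\end{equation*}
for some small constant $C$ (roughly $7$ for $\epsilon < 1/14$, which is why the hypothesis stipulates this range), and then rescale $\epsilon \mapsto \epsilon/C$ to obtain the stated bound. The final required sketch dimension is $k = \Omega((\tilde{r} + \log(n/\delta))/\epsilon^2)$, which majorizes both requirements. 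There is no significant obstacle in this proof; the main care is in the constant bookkeeping and in ensuring both high-probability events (column-norm preservation and stable-rank spectral bound) hold simultaneously via union bound, mirroring the argument used in the proof of Lemma~\ref{JL} but now invoking the sharper stable-rank guarantee of Lemma~\ref{stablerank-AB} for the central term.
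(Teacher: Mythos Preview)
Your proposal is correct and essentially mirrors the paper's own proof. The only cosmetic difference is the telescoping order: the paper adds and subtracts $D_A A^TB D_B$ and $D_A A^TB$ (so the ``middle'' matrix is $A^TB$, bounded directly by $\|A\|\|B\|$), whereas you add and subtract $\widetilde{A}^T\widetilde{B}D_B$ and $\widetilde{A}^T\widetilde{B}$ (so the middle matrix is $\widetilde{A}^T\widetilde{B}$, which you bound via the reverse triangle inequality). Both routes invoke Lemma~\ref{stablerank-AB} and the column-norm preservation from Eq.~(\ref{JL-sp}), arrive at the same $7\epsilon\|A\|\|B\|$ bound, and rescale $\epsilon\mapsto\epsilon/7$.
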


\begin{proof}
We can bound the spectral norm of the difference matrix as follows:
\begin{align}
||\widetilde{M} - A^TB|| &\overset{\xi_1} = ||D_A\widetilde{A}^T\widetilde{B}D_B - D_AA^TBD_B +D_AA^TBD_B  - D_AA^TB+D_AA^TB - A^TB|| \nonumber\\
&\le ||D_A|| ||\widetilde{A}^T\widetilde{B} - A^TB|| ||D_B|| + ||D_A||||A^TB||||D_B - I|| + ||D_A- I|| ||A^TB|| \nonumber\\
&\overset{\xi_3} \le (1+\epsilon)^2\epsilon ||A||||B|| + (1+\epsilon)\epsilon ||A||||B|| + \epsilon ||A||||B|| \nonumber \\
&\le 7\epsilon ||A||||B||,
\end{align}
where $\xi_1$ follows from the definition of $\widetilde{M}_{ij}$, and $\xi_2$ follows from Lemma~\ref{stablerank-AB} and bound in Eq.(\ref{JL-sp}). Rescaling $\epsilon$ as $\epsilon/7$ gives the desired bound in Lemma~\ref{stablerank}.
\end{proof}

We will frequently use the term \emph{with high probability}. Here is a formal definition.
\begin{definition}
We say that an event $E$ occurs with high probability (w.h.p.) in $n$ if the probability that its complement $\bar{E}$ happens is polynomially small, i.e., $Pr(\bar{E}) = O(\frac{1}{n^\alpha})$ for some constant $\alpha>0$. 
\end{definition}

The following two lemmas define a "nice" $\Pi$ and when this happens with high probability.
\begin{definition}
The random Gaussian matrix $\Pi$ is "nice" with parameter $\epsilon$ if for all $(i,j)$ such that $q_{ij}\le 1$ (i.e., $q_{ij}=\hat{q}_{ij}$), the sketched values $\widetilde{M}_{ij}$ satisfies the following two inequalities:
\begin{displaymath}
\frac{|\widetilde{M}_{ij}|}{\hat{q}_{ij}}\le (1+\epsilon)\frac{n}{m}(||A||_F^2+||B||_F^2), \quad 
\sum_{\{j: \hat{q}_{ij}=q_{ij}\}} \frac{\widetilde{M}_{ij}^2}{\hat{q}_{ij}} \le (1+\epsilon)\frac{2n}{m}(||A||_F^2+||B||_F^2)^2.
\end{displaymath}
\end{definition}

\begin{lemma}\label{nice}
If $k = \Omega(\frac{\log(n)}{\epsilon^2})$, and $0<\epsilon<1/14$, then the random Gaussian matrix $\Pi \in \mathbb{R}^{k\times d}$ is "nice" w.h.p. in $n$.
\end{lemma}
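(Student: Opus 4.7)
\textbf{Proof Plan for Lemma~\ref{nice}.} The plan is to reduce both "niceness" inequalities to a single uniform entrywise bound on $\widetilde{M}_{ij}$ coming from Lemma~\ref{JL}, and then exploit the explicit form of $q_{ij}$ via AM--GM. First, I invoke Lemma~\ref{JL} with failure probability $\delta = 1/n^{c}$ for some constant $c > 0$. The hypothesis $k = \Omega(\log(n)/\epsilon^{2})$ supplies the required sketch dimension $k = \Omega(\log(2n/\delta)/\epsilon^{2})$, so with probability at least $1 - 1/n^{c}$ the uniform bound
\[
|\widetilde{M}_{ij} - A_i^{T}B_j| \le \epsilon \|A_i\|\|B_j\|, \qquad \forall (i,j) \in [n_1]\times[n_2],
\]
holds. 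Combining this with Cauchy--Schwarz $|A_i^{T}B_j| \le \|A_i\|\|B_j\|$ gives the pointwise inequality $|\widetilde{M}_{ij}| \le (1+\epsilon)\|A_i\|\|B_j\|$ on the same high-probability event. Everything in the rest of the argument will be deterministic on this event.

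For the first "nice" inequality, I plug this bound into the definition $q_{ij} = \frac{m}{2n}\bigl(\tfrac{\|A_i\|^{2}}{\|A\|_F^{2}} + \tfrac{\|B_j\|^{2}}{\|B\|_F^{2}}\bigr)$ and apply AM--GM to the two terms inside the parentheses:
\[
\tfrac{\|A_i\|^{2}}{\|A\|_F^{2}} + \tfrac{\|B_j\|^{2}}{\|B\|_F^{2}} \ge \tfrac{2\|A_i\|\|B_j\|}{\|A\|_F\|B\|_F}.
\]
Therefore $\frac{|\widetilde{M}_{ij}|}{\hat q_{ij}} \le \frac{(1+\epsilon)n\|A\|_F\|B\|_F}{m} \le \frac{(1+\epsilon)n(\|A\|_F^{2}+\|B\|_F^{2})}{2m}$, using AM--GM once more on $\|A\|_F\|B\|_F$. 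This is stronger than required.

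For the second inequality, I keep only the $A$-term in the denominator of $q_{ij}$ to lower-bound it, and square the pointwise bound on $|\widetilde{M}_{ij}|$:
\[
\sum_{j:\hat q_{ij}=q_{ij}} \frac{\widetilde{M}_{ij}^{2}}{\hat q_{ij}} \le \sum_{j} \frac{(1+\epsilon)^{2}\|A_i\|^{2}\|B_j\|^{2}}{(m/2n)\cdot\|A_i\|^{2}/\|A\|_F^{2}} = \frac{2(1+\epsilon)^{2} n \|A\|_F^{2}\|B\|_F^{2}}{m}.
\]
A final application of AM--GM gives $\|A\|_F^{2}\|B\|_F^{2} \le \tfrac14(\|A\|_F^{2}+\|B\|_F^{2})^{2}$, so the sum is at most $\tfrac{(1+\epsilon)^{2} n}{2m}(\|A\|_F^{2}+\|B\|_F^{2})^{2}$, which for $\epsilon < 1/14$ is well below the required $(1+\epsilon)\tfrac{2n}{m}(\|A\|_F^{2}+\|B\|_F^{2})^{2}$.

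There is no serious obstacle: both required inequalities follow with significant slack once the uniform JL-type bound $|\widetilde{M}_{ij}|\le(1+\epsilon)\|A_i\|\|B_j\|$ is in place. The only mild subtlety is that I must apply Lemma~\ref{JL} with polynomially small $\delta$ so that the union bound over all $n_1 n_2$ pairs still yields a "w.h.p.\ in $n$" statement; this is absorbed into the constant in $k = \Omega(\log(n)/\epsilon^{2})$. The rescalings of $\epsilon$ by absolute constants needed to turn the $(1+\epsilon)^{2}$ and $7\epsilon$ factors (coming from Lemma~\ref{JL}) into the target $(1+\epsilon)$ are harmless under the assumption $\epsilon < 1/14$.
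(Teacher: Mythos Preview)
Your proposal is correct and follows essentially the same route as the paper: invoke Lemma~\ref{JL} with polynomially small failure probability to get the uniform bound $|\widetilde{M}_{ij}|\le(1+\epsilon)\|A_i\|\|B_j\|$, then combine this with the explicit form of $q_{ij}$ via elementary AM--GM-type inequalities. The only cosmetic difference is in the second ``nice'' inequality, where you lower-bound $q_{ij}$ by dropping the $B$-term in the denominator (yielding the clean $\tfrac{2n(1+\epsilon)^2}{m}\|A\|_F^2\|B\|_F^2$ after summing in $j$), whereas the paper instead applies $2\|A_i\|^2\|B_j\|^2\le\|A_i\|^4+\|B_j\|^4$ before bounding; your version is in fact slightly tighter and more transparent.
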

\begin{proof}
According to Lemma \ref{JL}, if $k = \Omega(\frac{\log(n)}{\epsilon^2})$, then w.h.p. in $n$, for all $(i,j)$ we have $|\widetilde{M}_{ij}-A_i^TB_j|\le \epsilon ||A_i||\cdot ||B_j||$. In other words, the following holds with probability at least $1-\delta$:
\begin{displaymath}
|\widetilde{M}_{ij}|\le |A_i^TB_j| + \epsilon ||A_i||\cdot||B_j|| \le (1+\epsilon)||A_i||\cdot||B_j||, \quad \forall (i,j)
\end{displaymath}
The above inequality is sufficient for $\Pi$ to be "nice":
\begin{displaymath}
\frac{\widetilde{M}_{ij}}{\hat{q}_{ij}} \le (1+\epsilon) \frac{||A_i||\cdot||B_j||}{\hat{q}_{ij}} \le (1+\epsilon) \frac{(||A_i||^2+||B_j||^2)/2}{m \cdot (\frac{||A_i||^2}{2n||A||^2_F}+\frac{||B_j||^2}{2n||B||^2_F})} \le (1+\epsilon)\frac{n}{m}(||A||_F^2+||B||_F^2)
\end{displaymath}
\begin{align}
\sum_{\{j: \hat{q}_{ij}=q_{ij}\}} \frac{\widetilde{M}_{ij}^2}{\hat{q}_{ij}} &\le \sum_{\{j: \hat{q}_{ij}=q_{ij}\}} \frac{(1+\epsilon)^2||A_i||^2||B_j||^2}{\hat{q}_{ij}} \nonumber \\
&\le (1+\epsilon) \sum_{\{j: \hat{q}_{ij}=q_{ij}\}} \frac{||A_i||^4+||B_j||^4}{m \cdot (\frac{||A_i||^2}{2n||A||^2_F}+\frac{||B_j||^2}{2n||B||^2_F})} \nonumber\\
&\le (1+\epsilon)\frac{2n}{m}(||A||_F^2+||B||_F^2)^2.\nonumber
\end{align}
Therefore, we conclude that if $k = \Omega(\frac{\log(n)}{\epsilon^2})$, then $\Pi$ is "nice" w.h.p. in $n$.
\end{proof}

\section{Proofs}\label{mainProof}

\subsection{Proof overview}
We now present the key steps in proving Theorem~\ref{mainTheorem}. The framework is similar to that of \prevalgo~\cite{leverage}. 

Our proof proceeds in three steps. In the first step, we show that the sampled matrix provides a good approximation of the actual matrix $A^TB$. The result is summarized in Lemma~\ref{init}. Here $R_{\Omega}(\widetilde{M})$ denotes the sampled matrix weighted by the inverse of sampling probability (see Line 4 of Algorithm~\ref{WAltMin}). Detailed proof can be found in Appendix~\ref{proofInit}. For consistency, we will use $C_i$ ($i=1,2,...$) to denote global constant that can vary from step to step. 

\begin{lemma}  \label{init} \emph{(Initialization)}
Let $m$ and $k$ satisfy the following conditions for sufficiently large constants $C_1$ and $C_2$:
\begin{displaymath}
m\ge C_1\left(\frac{||{A}||^2_F+||{B}||^2_F}{||{A}^T{B}||_F}\right)^2 \frac{n}{\delta^2}\log(n), 
\end{displaymath}
\begin{displaymath}
k\ge C_2 \frac{\tilde{r}+\log(n)}{\delta^2}\cdot\frac{||A||^2||B||^2}{||A^TB||_F^2},
\end{displaymath}
then the following holds w.h.p. in $n$:
\begin{displaymath}
||R_{\Omega}(\widetilde{M})-A^TB|| \le \delta ||A^TB||_F.
\end{displaymath} 
\end{lemma}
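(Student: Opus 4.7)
The plan is to split the error by a triangle inequality:
\[
\|R_{\Omega}(\widetilde{M}) - A^TB\| \;\le\; \underbrace{\|R_{\Omega}(\widetilde{M}) - \widetilde{M}\|}_{\text{sampling error}} \;+\; \underbrace{\|\widetilde{M} - A^TB\|}_{\text{sketching error}},
\]
and then bound each piece by $(\delta/2)\|A^TB\|_F$. The sketching error is handled directly by Lemma~\ref{stablerank}: setting its error parameter to $\epsilon_1 = \delta\|A^TB\|_F / (2\|A\|\|B\|)$ gives $\|\widetilde{M}-A^TB\| \le (\delta/2)\|A^TB\|_F$ whenever $k = \Omega\bigl(\frac{\tilde r + \log n}{\epsilon_1^2}\bigr)$, which upon substituting $\epsilon_1$ yields exactly the sketch-size hypothesis on $k$ stated in the lemma.

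For the sampling error I would apply matrix Bernstein (Lemma~\ref{Bern}) conditionally on $\Pi$. Since entries are sampled independently, writing $X_{ij} = (\widetilde{M}_{ij}/\hat{q}_{ij})\,\mathbf{1}[(i,j)\in\Omega]\, e_ie_j^T$ makes $R_{\Omega}(\widetilde{M}) = \sum_{ij} X_{ij}$ a sum of independent matrices with mean $\widetilde{M}$. Entries with $\hat{q}_{ij}=1$ are deterministic and contribute zero deviation, so only non-saturated entries enter Bernstein's two parameters:
\[
L = \max_{i,j}\frac{|\widetilde{M}_{ij}|}{\hat{q}_{ij}}, \qquad \sigma^2 = \max\!\left\{\max_i \sum_j \frac{\widetilde{M}_{ij}^2}{\hat{q}_{ij}},\ \max_j \sum_i \frac{\widetilde{M}_{ij}^2}{\hat{q}_{ij}}\right\}.
\]
These are precisely what the ``nice'' property of $\Pi$ (Lemma~\ref{nice}) controls: on the nice event, $L \lesssim (n/m)(\|A\|_F^2+\|B\|_F^2)$ and $\sigma^2 \lesssim (n/m)(\|A\|_F^2+\|B\|_F^2)^2$. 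Plugging these into the Bernstein tail at $t = (\delta/2)\|A^TB\|_F$ and forcing failure probability $O(1/\mathrm{poly}(n))$ produces exactly the sample-complexity condition $m \ge C\bigl(\tfrac{\|A\|_F^2+\|B\|_F^2}{\|A^TB\|_F}\bigr)^2 \tfrac{n\log n}{\delta^2}$; note that the $Lt/3$ term in the Bernstein denominator contributes only a lower-order $\log n$ requirement, so the variance term is what dictates the stated dependence on $m$.

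The main bookkeeping challenge is disentangling the two sources of randomness, namely $\Pi$ and $\Omega$. I would fix a high-probability ``good'' event for $\Pi$ on which the conclusions of Lemma~\ref{stablerank} and Lemma~\ref{nice} hold simultaneously; both require only $k = \Omega((\tilde r + \log n)/\epsilon^2)$, which is comfortably absorbed by the stated lower bound on $k$. Conditional on this event, the matrix-Bernstein step uses only the randomness in $\Omega$ and produces its own polynomially small failure probability. A final union bound plus the triangle inequality yields $\|R_{\Omega}(\widetilde{M}) - A^TB\| \le \delta\|A^TB\|_F$ w.h.p., as required. The one mildly subtle point is verifying the column-sum version of the variance bound (the $\max_j \sum_i$ term), but this follows from the symmetric version of the argument used to establish the ``nice'' event in Lemma~\ref{nice}.
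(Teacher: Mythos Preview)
Your proposal is correct and follows essentially the same route as the paper: the same triangle-inequality split into sampling and sketching errors, matrix Bernstein conditioned on the ``nice'' event for $\Pi$ (the paper's Lemma~\ref{C2}) to control $\|R_\Omega(\widetilde{M})-\widetilde{M}\|$, Lemma~\ref{stablerank} with the same reparametrization of $\epsilon$ to control $\|\widetilde{M}-A^TB\|$, and a union bound over the two sources of randomness. Your observation about the $\max_j\sum_i$ variance term is apt; the paper's ``nice'' definition states only the row-sum bound, but the column-sum bound indeed follows by the symmetric argument you indicate.
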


In the second step, we show that at each iteration of WAltMin algorithm, there is a geometric decrease in the distance between the computed subspaces $\widehat{U}$, $\widehat{V}$ and the optimal ones $U^*$, $V^*$. The result is shown in Lemma~\ref{dist}. Appendix~\ref{proofDist} provides the detailed proof. Here for any two orthonormal matrices $X$ and $Y$, we define their distance as the principal angle based distance, i.e., $dist(X,Y)=||X_{\bot}^TY||$, where $X_{\bot}$ denotes the subspace orthogonal to $X$.  

\begin{lemma} \label{dist} \emph{(WAltMin Descent)}
Let $k$, $m$, and $T$ satisfy the conditions stated in Theorem~\ref{mainTheorem}. Also, consider the case when $||A^TB-(A^TB)_r||_F\le \frac{1}{576\rho r^{1.5}}||(A^TB)_r||_F$. Let $\hat{U}^{(t)}$ and $\hat{V}^{(t+1)}$ be the $t$-th and ($t$+1)-th step iterates of the WAltMin procedure. Let $U^{(t)}$ and $V^{(t+1)}$ be the corresponding orthonormal matrices. Let $||(U^{(t)})^i|| \le 8\sqrt{r}\rho ||A_i||/||A||_F$ and $dist(U^{(t)}, U^*)\le 1/2$. Denote $A^TB$ as $M$, then the following holds with probability at least $1- \gamma /T$:
\begin{displaymath}
dist(V^{t+1}, V^*) \le \frac{1}{2} dist(U^t, U^*) +  \eta ||M-M_r||_F/\sigma_r^* + \eta,
\end{displaymath} 
\begin{displaymath}
||(V^{(t+1)})^j|| \le 8\sqrt{r}\rho||B_j||/||B||_F.
\end{displaymath}
\end{lemma}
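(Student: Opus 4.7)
}

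The plan is to follow the standard alternating-minimization analysis used for weighted matrix completion (as in \prevalgo~\cite{leverage}), and then separately account for the extra noise introduced by replacing the true entries of $M=A^TB$ by the rescaled-JL estimates in $\widetilde M$. First I would write the WAltMin update in closed form row-by-row: for each $j$,
\begin{equation*}
\widehat v_j^{(t+1)} \;=\; \Bigl(\sum_{i:(i,j)\in\Omega_{2t+1}} w_{ij}\,\widehat u_i^{(t)}(\widehat u_i^{(t)})^T\Bigr)^{-1}\sum_{i:(i,j)\in\Omega_{2t+1}} w_{ij}\,\widehat u_i^{(t)}\,\widetilde M_{ij},
\end{equation*}
where $\widehat u_i^{(t)}$ is the $i$-th row of $\widehat U^{(t)}$. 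In expectation over the sample $\Omega_{2t+1}$ this equals $((\widehat U^{(t)})^T\widehat U^{(t)})^{-1}(\widehat U^{(t)})^T\widetilde M_{\cdot j}$, i.e.\ the exact least-squares fit to the full column $\widetilde M_{\cdot j}$. Writing $M=M_r+(M-M_r)$ and $M_r=U^*\Sigma^*(V^*)^T$, I would decompose the error $V^{(t+1)}-V^*$ into three terms: (i) a "clean" alternating-minimization term that contracts as $\tfrac12\,\mathrm{dist}(U^{(t)},U^*)$ under the assumption $\mathrm{dist}(U^{(t)},U^*)\le 1/2$ and bounded incoherence; (ii) a sampling/weighting noise term $\bigl((\widehat U^{(t)})^T\widehat U^{(t)}\bigr)^{-1}(\widehat U^{(t)})^T(R_\Omega(\widetilde M)-\widetilde M)V^*$; and (iii) a sketching noise term $\bigl((\widehat U^{(t)})^T\widehat U^{(t)}\bigr)^{-1}(\widehat U^{(t)})^T(\widetilde M-M)V^*$.

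Next I would bound each of these three pieces. For (i) I would use the standard decomposition of the alternating-minimization update along $V^*$ and $V^*_\perp$, showing that the "tangential" component is $O(\mathrm{dist}(U^{(t)},U^*))$ with contraction $\tfrac12$ after normalization (this uses $\sigma_{\min}((\widehat U^{(t)})^T U^*)\ge 1/2$) and that the residual $M-M_r$ contributes $\eta\,\|M-M_r\|_F/\sigma_r^*$ after dividing by the smallest singular value of the updated factor. For (ii), since the sampling is independent with probabilities $\hat q_{ij}$ defined in (\ref{q_ij}) and the random matrix $\Pi$ is "nice" by Lemma~\ref{nice}, each summand is bounded in norm and has controlled variance; applying matrix Bernstein (Lemma~\ref{Bern}) on the appropriate sum and using the assumption $m\ge \tilde r^2 (\tfrac{\|A\|_F^2+\|B\|_F^2}{\|A^TB\|_F})^2 \cdot nr^3\rho^2 \log(n) T^2/(\gamma\eta^2)$ from (\ref{sam}) gives a bound of order $\eta\sigma_r^*$, absorbing into the $\eta$ term. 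For (iii) I would invoke Lemma~\ref{stablerank} (or Lemma~\ref{JL}) to control $\|\widetilde M-M\|$ by $\epsilon\|A\|\|B\|$; combined with $k\ge (\|A\|^2\|B\|^2\rho^2 r^3/\|A^TB\|_F^2)(\tilde r+\log n)/\eta^2$ from (\ref{dim}), this yields a bound of order $\eta\sigma_r^*$, contributing the final $\eta$ term in the stated inequality.

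The main obstacle, I expect, is handling (ii) in spectral rather than Frobenius norm while retaining the geometric $\tfrac12$ decay: the sampling operator $R_\Omega$ is applied to $\widetilde M$ whose entries already have noise from the JL embedding, so the usual "one step suffices" trick of \prevalgo~\cite{leverage} has to be carried out against the random sketched matrix. I would handle this by conditioning on the high-probability event of Lemma~\ref{nice} (so that the summands in matrix Bernstein are deterministically bounded) and then taking a union bound over the $T$ iterations and over fresh sample sets $\Omega_{2t+1}$, which is exactly why the algorithm splits $\Omega$ into $2T+1$ independent chunks and why the $T^2$ factor appears in (\ref{sam}).

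Finally, for the incoherence conclusion $\|(V^{(t+1)})^j\|\le 8\sqrt r\rho\|B_j\|/\|B\|_F$, I would bound each row $(\widehat V^{(t+1)})^j$ using the closed-form expression above: the operator norm of the inverse covariance is $O(1/\sigma_r^*)$ (by a Bernstein bound on $\sum w_{ij}\widehat u_i\widehat u_i^T$ using the incoherence of $\widehat U^{(t)}$), while the sum $\sum w_{ij}\widehat u_i\widetilde M_{ij}$ is controlled by the second "nice" inequality of Lemma~\ref{nice}, which precisely gives bounded column-wise energy $\sum_j \widetilde M_{ij}^2/\hat q_{ij}$. After QR-orthonormalization and using $\sigma_r^*\ge \|A^TB\|_F/(\rho\sqrt r)$, this yields the claimed $8\sqrt r\rho\|B_j\|/\|B\|_F$ row bound, so that the induction hypothesis for the next iteration is maintained.
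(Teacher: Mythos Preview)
Your overall plan matches the paper's---reduce to the \prevalgo\ WAltMin analysis and separately control the sketching error---but your three-term split misplaces the $\eta\|M-M_r\|_F/\sigma_r^*$ contribution, and (ii) as written will not give what you claim.

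The paper's decomposition (rank-$1$) is $\|\hat u^t\|\hat v^{t+1} = \sigma_1^*\langle u^*,u^t\rangle v^* - \sigma_1^* B^{-1}(\langle u^*,u^t\rangle B - C)v^* + B^{-1} y$ with $y = R_\Omega(\widetilde M - M_1)^T u^t$ and $B,C$ the sampled diagonal matrices. The first two terms give the $\tfrac12$ contraction; the \emph{exact} residual $(u^t)^T(M-M_1)$ is bounded by $\mathrm{dist}(u^t,u^*)\|M-M_1\|$ and is absorbed into that $\tfrac12$ factor---it does \emph{not} produce $\eta\|M-M_r\|_F/\sigma_r^*$. The key step is an intermediate Lemma~\ref{Thm2}: $\|(u^t)^T R_\Omega(\widetilde M - M_r) - (u^t)^T(M-M_r)\| \le \delta\|M-M_r\|_F + \delta\epsilon\|A\|_F\|B\|_F + \epsilon\|A\|\|B\|$, obtained by applying the \prevalgo\ sampling concentration to the \emph{residual} $\widetilde M - M_r$ and then a triangle inequality with Lemmas~\ref{JL} and~\ref{stablerank}. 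With $\delta=\eta/(2\tilde r)$ and $\epsilon=\eta\|A^TB\|/(2\|A\|\|B\|)$, the $\delta\|M-M_r\|_F$ piece is precisely what yields $\eta\|M-M_r\|_F/\sigma_r^*$, and $\epsilon\|A\|\|B\|$ yields the standalone $\eta$.

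Your (ii), by contrast, is the sampling deviation on the full $\widetilde M$, whose size is $\delta\|\widetilde M\|_F\approx\delta\|M\|_F$; forcing this down to $O(\eta\sigma_r^*)$ would require $m$ larger than~(\ref{sam}) allows. The fix is exactly to subtract $M_r$ inside $R_\Omega$ before concentrating, so the variance is governed by $\|\widetilde M-M_r\|_F\le\|M-M_r\|_F+\epsilon\|A\|_F\|B\|_F$. Two smaller mismatches: the paper uses a second-moment (Chebyshev) bound here, following \prevalgo's Lemma~C.5, rather than Bernstein---this is why~(\ref{sam}) carries $1/\gamma$ and not $\log(1/\gamma)$; and in the incoherence argument, with the normalized $u^t$ the sampled covariance $B_{jj}$ concentrates near $1$ (so $\|B^{-1}\|\le 2$), not $O(1/\sigma_r^*)$---the $\sigma_r^*$ enters only after dividing through by $\|\hat u^t\|\langle \hat v^{t+1},v^*\rangle$.
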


In the third step, we prove the spectral norm bound in Theorem~\ref{mainTheorem} using results from the above two lemmas. Comparing Lemma~\ref{init} and~\ref{dist} with their counterparts of \prevalgo (see Lemma C.2 and C.3 in~\cite{leverage}), we notice that Lemma~\ref{init} has the same bound as that of \prevalgo, but the bound in Lemma~\ref{dist} contains an extra term $\eta$. This term eventually leads to an additive error term $\eta \sigma_r^*$ in Eq.(\ref{errBound}). Detailed proof is in Appendix~\ref{proofMain}.


\subsection{Proof of Lemma~\ref{init}}\label{proofInit}
We first prove the following lemma, which shows that $R_{\Omega}(\widetilde{M})$ is close to $\widetilde{M}$. For simplicity of presentation, we define $C_{AB}:=\frac{(||A||_F^2+||B||_F^2)^2}{||A^TB||_F^2}$. 
\begin{lemma} \label{C2}
Suppose $\Pi$ is fixed and is "nice". Let $m\ge C_1\cdot C_{AB} \frac{n}{\delta^2}\log(n)$ for sufficiently large global constant $C_1$, then w.h.p. in $n$, the following is true:
\begin{displaymath}
||R_{\Omega}(\widetilde{M})-\widetilde{M}||\le \delta ||A^TB||_F.
\end{displaymath} 
\end{lemma}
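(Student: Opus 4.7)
\textbf{Proof proposal for Lemma~\ref{C2}.} The plan is to apply the matrix Bernstein inequality (Lemma~\ref{Bern}) to the random matrix $R_\Omega(\widetilde{M})-\widetilde{M}$, exploiting the fact that for a fixed ``nice'' sketch $\Pi$ all the randomness in the construction comes from the independent entrywise Bernoulli sampling. Let $\delta_{ij}\in\{0,1\}$ be the indicator that $(i,j)\in\Omega$, so $\mathbb{E}[\delta_{ij}]=\hat{q}_{ij}$, and define
\[
X_{ij}\;=\;\left(\tfrac{\delta_{ij}}{\hat{q}_{ij}}-1\right)\widetilde{M}_{ij}\, e_i e_j^T .
\]
Then $R_\Omega(\widetilde{M})-\widetilde{M}=\sum_{i,j}X_{ij}$ is a sum of independent zero-mean matrices conditional on $\Pi$. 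Crucially, whenever $\hat{q}_{ij}=1$ (i.e.\ $q_{ij}>1$) we have $\delta_{ij}\equiv 1$, so $X_{ij}\equiv 0$. Hence the relevant index set is exactly the one covered by the niceness condition.

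Next I would extract the two Bernstein parameters. The operator norm of each summand is bounded deterministically by $|X_{ij}|\le |\widetilde{M}_{ij}|/\hat{q}_{ij}$, which by the first niceness inequality gives
\[
L \;\le\; (1+\epsilon)\,\tfrac{n}{m}\bigl(\|A\|_F^2+\|B\|_F^2\bigr).
\]
For the variance, $\mathbb{E}[X_{ij}X_{ij}^T]=\tfrac{1-\hat{q}_{ij}}{\hat{q}_{ij}}\widetilde{M}_{ij}^2\, e_ie_i^T\preceq \tfrac{\widetilde{M}_{ij}^2}{\hat{q}_{ij}}\,e_ie_i^T$, so $\sum_{i,j}\mathbb{E}[X_{ij}X_{ij}^T]$ is diagonal with $i$-th entry $\sum_{j:\hat{q}_{ij}=q_{ij}}\widetilde{M}_{ij}^2/\hat{q}_{ij}$, and the second niceness inequality gives
\[
\sigma^2 \;\le\; (1+\epsilon)\,\tfrac{2n}{m}\bigl(\|A\|_F^2+\|B\|_F^2\bigr)^2 ,
\]
with the symmetric bound $\mathbb{E}[X_{ij}^T X_{ij}]$ handled analogously (using that each $e_ie_j^Te_je_i^T=e_ie_i^T$ is rank one).

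Plugging these into Bernstein's inequality with deviation $t=\delta\|A^TB\|_F$ yields a tail of order $2n\exp\!\bigl(-c\,t^2/(\sigma^2+Lt/3)\bigr)$. I would then solve $t^2/\sigma^2\gtrsim (\alpha+1)\log n$, which rearranges to
\[
m \;\gtrsim\; \frac{n\log n}{\delta^2}\cdot\frac{(\|A\|_F^2+\|B\|_F^2)^2}{\|A^TB\|_F^2}\;=\;C_{AB}\,\frac{n\log n}{\delta^2},
\]
and verify that the corresponding condition for the $Lt/3$ term is strictly weaker (it scales like $1/\delta$ rather than $1/\delta^2$ and so is absorbed by the same $m$). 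This gives the probability bound $O(n^{-\alpha})$, i.e.\ the high-probability claim.

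The main obstacle is essentially bookkeeping rather than conceptual: one has to be careful that the niceness bounds only cover indices with $q_{ij}\le 1$ and verify that the remaining ``saturated'' indices contribute nothing to the sum (so that $L$ and $\sigma^2$ derived from niceness genuinely control all nonzero $X_{ij}$). Once this is observed, the rest is a direct matrix Bernstein computation analogous to the sampling analysis of \prevalgo in~\cite{leverage}, with $\widetilde{M}$ playing the role that $A^TB$ played there; the niceness of $\Pi$ was tailored precisely to make this substitution go through.
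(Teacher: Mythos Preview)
Your proposal is correct and follows essentially the same route as the paper's proof: the paper likewise writes $R_\Omega(\widetilde{M})-\widetilde{M}$ as $\sum_{ij}(\delta_{ij}-\hat{q}_{ij})w_{ij}\widetilde{M}_{ij}e_ie_j^T$, bounds $L$ and $\sigma^2$ using exactly the two ``niceness'' inequalities (the latter via the diagonal structure of $\sum_{ij}\mathbb{E}[X_{ij}X_{ij}^T]$), and applies matrix Bernstein with $t=\delta\|A^TB\|_F$. Your explicit observation that indices with $\hat q_{ij}=1$ contribute zero, and that the $Lt/3$ term is dominated by the $\sigma^2$ term, are precisely the bookkeeping points the paper handles (somewhat more implicitly).
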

\begin{proof}
This lemma can be proved in the same way as the proof of Lemma C.2 in~\cite{leverage}. The key idea is to use the matrix Bernstein inequality. Let $X_{ij} = (\delta_{ij}-\hat{q}_{ij})w_{ij}\widetilde{M}_{ij}e_ie_j^T$, where $\delta_{ij}$ is a $\{0,1\}$ random variable indicating whether the value at $(i,j)$ has been sampled. Since $\Pi$ is fixed, $\{X_{ij}\}_{i,j=1}^n$ are independent zero mean random matrices. Furthermore, $\sum_{i,j}\{X_{ij}\}_{i,j=1}^n=R_{\Omega}(\widetilde{M})-\widetilde{M}$.

\vspace{0.2in}
Since $\Pi$ is "nice" with parameter $0<\epsilon<1/14$, we can bound the 1st and 2nd moment of $X_{ij}$ as follows:
\begin{displaymath}
||X_{ij}|| = \max\{|(1-\hat{q}_{ij})w_{ij}\widetilde{M}_{ij}|, |\hat{q}_{ij}w_{ij}\widetilde{M}_{ij}|\} \le \frac{|\widetilde{M}_{ij}|}{\hat{q}_{ij}} \overset{\xi_1}\le (1+\epsilon)\frac{n}{m}(||A||_F^2+||B||_F^2);
\end{displaymath}
\begin{align}
\sigma^2 &= \max\{\left|\left| \mathbb{E}\left[\sum_{ij}X_{ij}X_{ij}^T\right]\right|\right|, \left|\left| \mathbb{E}\left[\sum_{ij}X_{ij}^TX_{ij}\right]\right|\right|\} \overset{\xi_2}= \max_i \left|\sum_j \hat{q}_{ij}(1-\hat{q}_{ij})w_{ij}^2\widetilde{M}_{ij}^2 \right| \nonumber\\
& = \max_i |(\frac{1}{\hat{q}_{ij}}-1)\widetilde{M}_{ij}^2| \overset{\xi_3}\le \sum_{\{j: \hat{q}_{ij}=q_{ij}\}} \frac{\widetilde{M}_{ij}^2}{\hat{q}_{ij}}\overset{\xi_1} \le (1+\epsilon)\frac{2n}{m}(||A||_F^2+||B||_F^2)^2, \nonumber
\end{align}
where $\xi_1$ follows from Lemma~\ref{nice}, $\xi_2$ follows from a direct calculation, and $\xi_3$ follows from the fact that $\hat{q}_{ij}\le 1$. Now we can use matrix Bernstein inequality (see Lemma~\ref{Bern}) with $t=\delta ||A^TB||_F$ to show that if $m\ge (1+\epsilon)C_1 C_{AB} \frac{n}{\delta^2}\log(n)$, then the desired inequality holds w.h.p. in $n$, where $C_1$ is some global constant independent of $A$ and $B$. Note that since $0<\epsilon<1/14$, $(1+\epsilon)<2$. Rescaling $C_1$ gives the desired result.
\end{proof}

Now we are ready to {\bf prove Lemma~\ref{init}}, which is a counterpart of Lemma C.2 in~\cite{leverage}.
\begin{proof}
We first show that $||R_{\Omega}(\widetilde{M})-\widetilde{M}||\le \delta ||A^TB||_F$ holds w.h.p. in $n$ over the randomness of  $\Pi$. Note that in Lemma~\ref{C2}, we have shown that it is true for a fixed and "nice" $\Pi$, now we want to show that it also holds w.h.p. in $n$ even for a random chosen $\Pi$. 

Let $G$ be the event that we desire, i.e., $G=\{||R_{\Omega}(\widetilde{A}^T\widetilde{B})-\widetilde{A}^T\widetilde{B}||\le \delta ||A^TB||_F\}$. Let $\bar{G}$ be the complimentary event. By conditioning on $\Pi$, we can bound the probability of $\bar{G}$ as
\begin{align}
Pr(\bar{G}) & = Pr(\bar{G}| \Pi \text{ is "nice"}) Pr(\Pi \text{ is "nice"}) + Pr(\bar{G}| \Pi \text{ is not "nice"}) Pr(\Pi \text{ is not "nice"}) \nonumber\\
&\le Pr(\bar{G}| \Pi \text{ is "nice"}) + Pr(\Pi \text{ is not "nice"}). \nonumber
\end{align}
According to Lemma \ref{C2} and Lemma \ref{nice}, if $m\ge C_1\cdot C_{AB} \frac{n}{\delta^2}\log(n)$, and $k\ge C_2\frac{\log(n)}{\epsilon^2}$, then both events $\{G| \Pi \text{ is "nice"}\}$ and $Pr(\Pi \text{ is "nice"})$ happen w.h.p. in $n$. Therefore, the the probability of $\bar{G}$ is polynomially small in $n$, i.e., the desired event $G$ happens w.h.p. in $n$.

Next we show that $||\widetilde{M}-A^TB|| \le \delta ||A^TB||_F$ holds w.h.p. in $n$. According to Lemma~\ref{stablerank}, if $k = \Theta (\frac{\tilde{r}+\log(n)}{\epsilon^2})$, then w.h.p. in $n$, we have $||\widetilde{M}-A^TB|| \le \epsilon ||A|| ||B||$. Now let $\epsilon:=\delta \frac{||A^TB||_F}{||A|| ||B||}$, we have that if $k=\Theta(\frac{\tilde{r}+\log(n)}{\delta^2}\cdot\frac{||A||^2||B||^2}{||A^TB||_F^2})$, then $||\widetilde{M}-A^TB|| \le \delta ||A^TB||_F$ holds w.h.p. in $n$.

By triangle inequality, we have $||R_{\Omega}(\widetilde{M})-A^TB||\le ||R_{\Omega}(\widetilde{M})-\widetilde{M}|| + ||\widetilde{M}-A^TB||$. We have shown that w.h.p. in $n$, both terms are less than $\delta ||A^TB||_F$. By rescaling $\delta$ as $\delta/2$, we have that the desired inequality $||R_{\Omega}(\widetilde{A}^T\widetilde{B})-A^TB|| \le \delta ||A^TB||_F$ holds w.h.p. in $n$, when $m$ and $k$ are chosen according to the statement of Lemma~\ref{init}. 
\end{proof}

Because the bound of Lemma~\ref{init} has the same form as that of Lemma C.2 in \cite{leverage}, the corollary of Lemma C.2 also holds for $R_{\Omega}(\widetilde{M})$, which is stated here without proof: if $||A^TB-(A^TB)_r||_F\le \frac{1}{576\kappa r^{1.5}} ||(A^TB)_r||_F$, then w.h.p. in $n$ we have
\begin{displaymath}
||(\widehat{U}^{(0)})^i|| \le 8\sqrt{r} ||A_i||/||A||_F\quad \text{and} \quad dist(\widehat{U}^{(0)}, U^*) \le 1/2,
\end{displaymath}
where $\widehat{U}^{(0)}$ is the initial iterate produced by the WAltMin algorithm (see Step 6 of Algorithm~\ref{WAltMin}). This corollary will be used in the proof of Lemma~\ref{dist}.

Similar to the original proof in \cite{leverage}, we can now consider two cases separately: (1) $||A^TB-(A^TB)_r||_F\ge \frac{1}{576\rho r^{1.5}} ||(A^TB)_r||_F$; (2) $||A^TB-(A^TB)_r||_F\le \frac{1}{576\rho r^{1.5}} ||(A^TB)_r||_F$. The first case is simple: use Lemma~\ref{init} and Wely's inequality \cite{terrytao} already implies the desired bound in Theorem \ref{mainTheorem}. To see why, note that Lemma~\ref{init} and Wely's inequality imply that
\begin{align}
&\quad\;||(A^TB)_r - (R_{\Omega}(\widetilde{M})_r|| \nonumber\\
&\overset{\xi_1}\le ||A^TB-(A^TB)_r||+ ||A^TB-R_{\Omega}(\widetilde{M})|| + ||R_{\Omega}(\widetilde{M})-(R_{\Omega}(\widetilde{M}))_r|| \nonumber\\
& \overset{\xi_2}\le ||A^TB-(A^TB)_r||+\delta ||A^TB||_F + ||R_{\Omega}(\widetilde{M}) - A^TB|| + ||A^TB- (A^TB)_r|| \nonumber\\
&\overset{\xi_3} \le 2||A^TB- (A^TB)_r||  + 2\delta ||A^TB||_F, \label{firstCase}
\end{align}
where $M_r$ denotes the best rank-$r$ approximation of $M$, $\xi_1$ follows triangle inequality,  $\xi_2$ follows from Lemma~\ref{init} and Wely's inequality, and $\xi_3$ follows from Lemma~\ref{init}. If $||A^TB-(A^TB)_r||_F\ge \frac{1}{576\rho r^{1.5}} ||(A^TB)_r||_F$, then $||A^TB||_F= ||(A^TB)_r||_F+||A^TB-(A^TB)_r||_F\le O(\rho r^{1.5}) ||A^TB-(A^TB)_r||_F$. Setting $\delta = O(\eta/(\rho r^{1.5}))$ in Eq.(\ref{firstCase}) gives the desired error bound in Theorem~\ref{mainTheorem}. Therefore, in the following analysis we only need to consider the second case.

\subsection{Proof of Lemma~\ref{dist}}\label{proofDist}

We first prove the following lemma, which is a counterpart ofLemma C.5 in~\cite{leverage}. For simplicity of presentation, we use $M$ to denote $A^TB$ in the following proof.

\begin{lemma} \label{Thm2}
If $m\ge C_1 nr\log(n)T/(\gamma\delta^2)$ and $k \ge C_2(\tilde{r}+\log(n))/\epsilon^2$ for sufficiently large global constants $C_1$ and $C_2$, then the following holds with probability at least $1-\gamma/T$:
\begin{displaymath}
||(U^{(t)})^TR_{\Omega}(\widetilde{M}-M_r) -(U^{(t)})^T(M-M_r)||\le \delta ||M-M_r||_F + \delta \epsilon ||A||_F||B||_F + \epsilon ||A||||B||.
\end{displaymath} 
\end{lemma}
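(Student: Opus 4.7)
The plan is to decompose the error into one sketching contribution and two sampling contributions, and bound each separately. Using $\widetilde{M}-M_r=(M-M_r)+(\widetilde{M}-M)$ together with the identity $R_\Omega(X)-X = \sum_{ij}(\delta_{ij}w_{ij}-1)X_{ij}\,e_ie_j^T$ (where $\delta_{ij}$ is the indicator of $(i,j)\in\Omega$), I would write
\[
R_\Omega(\widetilde{M}-M_r) - (M-M_r) \;=\; T_1 + T_2 + T_3,
\]
where $T_1 = R_\Omega(M-M_r)-(M-M_r)$, $T_2 = R_\Omega(\widetilde{M}-M)-(\widetilde{M}-M)$, and $T_3 = \widetilde{M}-M$. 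It then suffices, via a union bound, to prove $\|(U^{(t)})^T T_1\|\le\delta\|M-M_r\|_F$, $\|(U^{(t)})^T T_2\|\le\delta\epsilon\|A\|_F\|B\|_F$, and $\|(U^{(t)})^T T_3\|\le\epsilon\|A\|\|B\|$, each with probability at least $1-\gamma/(3T)$, and then rescale constants.

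The $T_3$ bound is immediate: since $U^{(t)}$ is orthonormal, $\|(U^{(t)})^T T_3\|\le\|\widetilde{M}-M\|\le\epsilon\|A\|\|B\|$ by Lemma~\ref{stablerank}, under the assumed $k\ge C_2(\tilde r+\log n)/\epsilon^2$. For $T_1$ and $T_2$, apply matrix Bernstein (Lemma~\ref{Bern}) to the independent zero-mean random matrices
\[
Y^{(\ell)}_{ij} \;=\; (\delta_{ij}w_{ij}-1)\,E^{(\ell)}_{ij}\,(U^{(t)})^T e_i e_j^T,\qquad \ell=1,2,
\]
with $E^{(1)} = M-M_r$ and $E^{(2)} = \widetilde{M}-M$. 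The per-summand operator-norm bound $L_\ell$ uses three ingredients: (i) $|\delta_{ij}w_{ij}-1|\le 1/\hat q_{ij}$, with $Y^{(\ell)}_{ij}=0$ whenever $q_{ij}\ge 1$; (ii) the incoherence hypothesis $\|(U^{(t)})^i\|\le 8\sqrt r\rho\,\|A_i\|/\|A\|_F$; and (iii) the AM-GM inequality $\|A_i\|\|B_j\|/\hat q_{ij}\le n\|A\|_F\|B\|_F/m$, which follows from $\|A_i\|^2/\|A\|_F^2+\|B_j\|^2/\|B\|_F^2\ge 2\|A_i\|\|B_j\|/(\|A\|_F\|B\|_F)$ and the definition of $q_{ij}$ in Eq.~(\ref{q_ij}). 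For $T_2$ I would additionally invoke the entrywise sketching bound $|(\widetilde{M}-M)_{ij}|\le\epsilon\|A_i\|\|B_j\|$ from Lemma~\ref{JL}. The variance $\sigma_\ell^2=\max\{\|\mathbb{E}\sum Y^{(\ell)}_{ij}(Y^{(\ell)}_{ij})^T\|,\|\mathbb{E}\sum (Y^{(\ell)}_{ij})^T Y^{(\ell)}_{ij}\|\}$ unfolds using $\mathbb{E}[(\delta_{ij}w_{ij}-1)^2]\le 1/\hat q_{ij}$; the $\sum_j e_je_j^T = I$ collapse and the incoherence bound reduce everything to $\|M-M_r\|_F^2$ (for $\ell=1$) and $\epsilon^2\|A\|_F^2\|B\|_F^2$ (for $\ell=2$), each carrying an overall $1/m$ factor. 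Plugging $L_\ell,\sigma_\ell^2$ into Bernstein with $t$ equal to the target tail and choosing $m\ge C_1 nr\log(n)T/(\gamma\delta^2)$ then yields the two tail bounds.

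\textbf{Main obstacle.} The technical heart is the variance computation for $T_2$, since the per-entry sketch error $(\widetilde{M}-M)_{ij}$ is itself random in $\Pi$ and therefore not independent of what is being sampled. I would handle this by first conditioning on $\Pi$ being ``nice'' in the sense of Lemma~\ref{nice}, under which both the entrywise estimate from Lemma~\ref{JL} and the operator-norm estimate from Lemma~\ref{stablerank} hold as deterministic bounds, and then absorbing the polynomially small probability that $\Pi$ fails to be nice into the overall failure probability, exactly as in the proof of Lemma~\ref{init}. The other delicate point is extracting the correct $r$-dependence from the variance term: the row-incoherence bound is what converts the raw $\|A_i\|^2/\|A\|_F^2$ weights coming out of $\mathbb{E}[Y Y^T]$ into the clean $r\log(n)$ scaling that the sample-complexity condition on $m$ is designed to absorb.
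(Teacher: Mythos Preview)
Your decomposition $T_1+T_2+T_3$ is correct, and the bounds on $T_3$ (via Lemma~\ref{stablerank}) and $T_2$ (entrywise sketching bound from Lemma~\ref{JL} plus concentration) go through. Conditioning on a ``nice'' $\Pi$ to decouple the sketching and sampling randomness is also right and matches what the paper does.

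The gap is in the $T_1$ step. Matrix Bernstein requires a uniform per-summand bound $L_1$, and for that you need entrywise control of $(M-M_r)_{ij}$ so that it can be combined with your ingredient (iii). None of your listed ingredients supplies it: you have $|M_{ij}|\le\|A_i\|\,\|B_j\|$, but no comparable bound on the entries of $M_r$ or of $M-M_r$, since no incoherence of the optimal singular vectors $U^*,V^*$ is assumed or proved anywhere in the paper. Without such a bound, $|(M-M_r)_{ij}|/\hat q_{ij}$ can be made large by taking $\|A_i\|$ (hence $\hat q_{ij}$) small on a row where $(M_r)^i$ is not correspondingly small, and the Bernstein $L$-term blows up.

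The paper avoids this in two ways. First, it uses only a two-term split,
\[
\bigl[R_\Omega(\widetilde M-M_r)-(\widetilde M-M_r)\bigr]\;+\;(\widetilde M-M),
\]
running a \emph{single} sampling concentration on the first bracket to obtain $\delta\|\widetilde M-M_r\|_F$, and only afterward applying the triangle inequality $\|\widetilde M-M_r\|_F\le\|M-M_r\|_F+\|\widetilde M-M\|_F$ together with Lemmas~\ref{JL} and~\ref{stablerank}. Second, and crucially, for that concentration it invokes a second-moment matrix Chebyshev inequality (inherited from~\cite{leverage}), not Bernstein, so only the variance is needed and no $L$-bound ever enters. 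This also explains why the hypothesis on $m$ scales as $T/\gamma$ rather than $\log(T/\gamma)$: the polynomial dependence on the failure probability is the signature of a Chebyshev-type tail, not an exponential one.

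Your route can be salvaged by switching $T_1$ (and $T_2$) to Chebyshev, since your variance calculation already reduces cleanly to $\|M-M_r\|_F^2$; but once you do that, the paper's two-term argument is strictly simpler.
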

\begin{proof}
For a fixed $\Pi$, we have that if $m\ge C_1 nr\log(n)T/(\gamma\delta^2)$, then following holds with probability at least $1-\gamma/T$:
\begin{equation}
||(U^{(t)})^TR_{\Omega}(\widetilde{M}-M_r) -(U^{(t)})^T(\widetilde{M}-M_r)||\le \delta ||\widetilde{M}-M_r||_F.  \label{UM}
\end{equation}
The proof of Eq.(\ref{UM}) is exactly the same as the proof of Lemma C.5/B.6/B.2 in~\cite{leverage}, so we omit its details here. The key idea is to define a set of zero-mean random matrices $X_{ij}$ such that $\sum_{ij} X_{ij} = (U^{(t)})^TR_{\Omega}(\widetilde{M}-M_r) -(U^{(t)})^T(\widetilde{M}-M_r)$, and then use second moment-based matrix Chebyshev inequality to obtain the desired bound.

According to Lemma \ref{JL} and Lemma \ref{stablerank}, if $k=\Theta((\tilde{r}+\log(n))/\epsilon^2)$, then w.h.p. in $n$, the following holds:
\begin{equation}
||\widetilde{M} - A^TB||_F \le \epsilon ||A||_F ||B||_F, \quad ||\widetilde{M} - A^TB|| \le \epsilon ||A|| ||B||. \label{JLS}
\end{equation}

Using triangle inequality, we have that if $m$ and $k$ satisfy the conditions of Lemma~\ref{Thm2}, then the following holds with probability at least $1-\gamma/T$:
\begin{align}
&\quad\;||(U^{(t)})^TR_{\Omega}(\widetilde{M}-M_r) -(U^{(t)})^T(M-M_r)|| \nonumber\\
& \le ||(U^{(t)})^TR_{\Omega}(\widetilde{M}-M_r) -(U^{(t)})^T(\widetilde{M}-M_r)|| + ||(U^{(t)})^T(M-\widetilde{M})||\nonumber\\
& \overset{\xi_1}\le \delta ||\widetilde{M}-M_r||_F +  ||M-\widetilde{M}||\nonumber\\
&\le  \delta ||M-M_r||_F + \delta||M-\widetilde{M}||_F + ||M-\widetilde{M}|| \nonumber\\
&\overset{\xi_2}\le \delta ||M-M_r||_F + \delta \epsilon ||A||_F||B||_F + \epsilon ||A||||B||, \nonumber
\end{align}
where $\xi_1$ follows from Eq.(\ref{UM}), and $\xi_2$ follows from Eq.(\ref{JLS}).
\end{proof}



Now we are ready to {\bf prove Lemma~\ref{dist}}. For simplicity, we focus on the rank-1 case here. Rank-$r$ proof follows a similar line of reasoning and can be obtained by combining the current proof with the rank-$r$ analysis in the original proof of \prevalgo~\cite{leverage}. Note that compared to Lemma C.5 in~\cite{leverage}, Lemma~\ref{Thm2} contains two extra terms $\delta \epsilon ||A||_F||B||_F + \epsilon ||A||||B||$. Therefore, we need to be careful for steps that involve Lemma~\ref{Thm2}. 

In the rank-1 case, we use $\hat{u}^t$ and $\hat{v}^{t+1}$ to denote the $t$-th and ($t$+1)-th step iterates (which are column vectors in this case) of the WAltMin algorithm. Let $u^t$ and $v^{t+1}$ be the corresponding normalized vectors. 

\begin{proof}
This proof contains two parts. In the first part, we will prove that the distance $dist(v^{t+1}, v^*)$ decreases geometrically over time. In the second part, we show that the $j$-th entry of $v^{t+1}$ satisfies $|v_j^{t+1}| \le c_1 ||B_j||/||B||_F$, for some constant $c_1$.

{\bf Bounding $dist(v^{t+1}, v^*)$}:

In Lemma~\ref{Thm2}, set $\epsilon = \frac{||A^TB||}{2||A||||B||}\eta$ and $\delta = \frac{\eta}{2\tilde{r}}$, where $0<\eta<1$, then we have $\delta\epsilon ||A||_F||B||_F \le \frac{||A||_F||B||_F}{||A||||B||}\cdot \frac{\eta^2}{2\tilde{r}}||A^TB|| \le  \eta ||A^TB||/2$, and $\epsilon ||A|| ||B|| \le  \eta ||A^TB||/2$.
Therefore, with probability at least $1-\gamma/T$, the following holds:
\begin{equation}
||(u^{t})^TR_{\Omega}(\widetilde{M}-M_1) -(u^{t})^T(M-M_1)||\le \eta ||M-M_1||_F/\tilde{r} + \eta \sigma_1^*. 
\end{equation}
Hence, we have $||(u^{t})^TR_{\Omega}(\widetilde{M}-M_1)||\le dist(u^{t}, u^*)||M-M_1|| +  \eta ||M-M_1||_F/\tilde{r} + \eta \sigma_1^*$. 

Using the explicit formula for WAltMin update (see Eq.(46) and Eq.(47) in~\cite{leverage}), we can bound $\langle \hat{v}^{t+1}, v^* \rangle$ and $\langle \hat{v}^{t+1}, v_{\bot}^* \rangle$ as follows.
\begin{displaymath}
||\hat{u}^t||\langle \hat{v}^{t+1}, v^* \rangle/\sigma_1^* \ge \langle u^t, u^* \rangle - \frac{\delta_1}{1-\delta_1} \sqrt{1-\langle u^t, u^* \rangle^2} - \frac{1}{1-\delta_1}(\eta \frac{||M-M_1||_F}{\tilde{r}\sigma_1^*} + \eta).
\end{displaymath}
\begin{displaymath}
||\hat{u}^t||\langle \hat{v}^{t+1}, v_{\bot}^* \rangle/\sigma_1^*  \le  \frac{\delta_1}{1-\delta_1}\sqrt{1-\langle u^t, u^* \rangle^2} +  \frac{1}{1-\delta_1} (dist(u^{t}, u^*)\frac{||M-M_1||}{\sigma_1^*} +  \eta \frac{||M-M_1||_F}{\tilde{r}\sigma_1^*} + \eta).
\end{displaymath}

As discussed in the end of Appendix~\ref{proofInit}, we only need to consider the case when $||A^TB-(A^TB)_r||_F \le \frac{1}{576\rho r^{1.5}}||(A^TB)_r||_F$, where $\rho = \sigma_1^*/\sigma_r^*$. In the rank-1 case, this condition reduces to $||M-M_1||_F\le \frac{\sigma^*}{576}$. For sufficiently small constants $\delta_1$ and $\eta$ (e.g., $\delta_1\le \frac{1}{20}$, $\eta\le \frac{1}{20}$), and use the fact that $\langle u^t, u^*\rangle \ge \langle u^0, u^* \rangle$ and $dist(u^0,u^*)\le 1/2$, we can further bound $\langle \hat{v}^{t+1}, v^* \rangle$ and $\langle \hat{v}^{t+1}, v_{\bot}^* \rangle$ as 
\begin{equation}
||\hat{u}^t||\langle \hat{v}^{t+1}, v^* \rangle/\sigma_1^* \ge \langle u^0, u^*\rangle -  \frac{1}{10} \sqrt{1-\langle u^0, u^* \rangle^2}  - \frac{1}{10} \ge \frac{\sqrt{3}}{2}-\frac{2}{10} \ge \frac{1}{2}. \label{v_align}
\end{equation}
\begin{align}
||\hat{u}^t||\langle \hat{v}^{t+1}, v_{\bot}^* \rangle/\sigma_1^* &\le \frac{\delta_1}{1-\delta_1} dist(u^t, u^*)+\frac{1}{576(1-\delta_1)}dist(u^t,u^*) + \frac{1}{1-\delta_1}(\eta \frac{||M-M_1||_F}{\tilde{r}\sigma_1^*} + \eta) \nonumber\\
&\overset{\xi_1} \le \frac{1}{4} dist(u^t, u^*) + 2(\eta ||M-M_1||_F/\sigma_1^* + \eta), \label{v_orth}
\end{align}
where $\xi_1$ uses the fact that $\tilde{r}\ge 1$ and the assumption that $\delta_1$ is sufficiently small.

Now we are ready to bound $dist(v^{t+1}, v^*)$ as
\begin{align}
dist(v^{t+1},v^*) &= \sqrt{1-\langle v^{t+1}, v^*\rangle^2} = \frac{\langle \hat{v}^{t+1}, v^*_{\bot}\rangle}{ \sqrt{\langle \hat{v}^{t+1}, v^*_{\bot}\rangle^2+\langle \hat{v}^{t+1}, v^*\rangle^2}} \le \frac{\langle \hat{v}^{t+1}, v^*_{\bot}\rangle}{\langle \hat{v}^{t+1}, v^*\rangle} \nonumber\\
& \overset{\xi_1}\le \frac{1}{2} dist(u^t, u^*) + 4(\eta ||M-M_r||_F/\sigma_1^* + \eta), 
\end{align}
where $\xi_1$ follows from substituting Eqs. (\ref{v_align}) and (\ref{v_orth}). Rescaling $\eta$ as $\eta/4$ gives the desired bound of Lemma~\ref{dist} for the rank-1 case. Rank-r proof can be obtained by following a similar framework.

{\bf Bounding $v_j^{t+1}$}:

In this step, we need to prove that the $j$-th entry of $v^{t+1}$ satisfies $|v_j^{t+1}| \le c_1 \frac{||B_j||}{||B||_F}$ for all $j$, under the assumption that $u^t$ satisfies the norm bound $|u_i^{t}| \le c_1 \frac{||A_i||}{||A||_F}$ for all $i$. 


The proof follows very closely to the second part of proving Lemma C.3 in~\cite{leverage}, except that an extra multiplicative term $(1+\epsilon)$ will show up when bounding $\sum_i \delta_{ij} w_{ij} u_i^t \widetilde{M}_{ij}$ using Bernstein inequality. More specifically, let $X_i = (\delta_{ij} - \hat{q}_{ij})w_{ij}u_i^t \widetilde{M}_{ij}$. Note that if $\hat{q}_{ij}=1$, then $\delta_{ij}=1$, $X_i=0$, so we only need to consider the case when $\hat{q}_{ij}<1$, i.e., $\hat{q}_{ij} = q_{ij}$, where $q_{ij}$ is defined in Eq.(\ref{q_ij}). 

Suppose $\Pi$ is fixed and its dimension satisfies $k = \Omega(\frac{\log(n)}{\epsilon^2})$, then according to Lemma~\ref{JL}, we have that w.h.p. in $n$, 
\begin{equation}
|\widetilde{M}_{ij}|\le |M_{ij}| + \epsilon ||A_i||\cdot||B_j|| \le (1+\epsilon)||A_i||\cdot||B_j||, \quad \forall (i,j). \label{b1}
\end{equation}
Hence, we have 
\begin{equation}
\frac{\widetilde{M}^2_{ij}}{\hat{q}_{ij}} \overset{\xi_1}\le \frac{(1+\epsilon)^2||A_i||^2||B_j||^2}{m \cdot (\frac{||A_i||^2}{2n||A||^2_F}+ \frac{||B_j||^2}{2n||B||^2_F})} \le \frac{2n(1+\epsilon)^2}{m}\cdot ||B_j||^2||A||^2_F, \label{a1}
\end{equation}
\begin{equation}
\frac{(u_i^t)^2}{\hat{q}_{ij}} \overset{\xi_2}\le \frac{c^2_1 ||A_i||^2/||A||^2_F}{m \cdot (\frac{||A_i||^2}{2n||A||^2_F}+ \frac{||B_j||^2}{2n||B||^2_F})} \le \frac{2nc_1^2}{m}, \label{a2}
\end{equation}
where $\xi_1$ follows from substituting Eqs.(\ref{b1}) and (\ref{q_ij}), and $\xi_2$ follows from the assumption that $|u_i^t|\le c_1 ||A_i||/||A||_F$.   

We can now bound the first and second moments of $X_i$ as 
\begin{displaymath}
|X_i| \le |w_{ij}u_i^t\widetilde{M}_{ij}| \le \sqrt{\frac{(u_i^t)^2}{\hat{q}_{ij}}} \sqrt{\frac{\widetilde{M}^2_{ij}}{\hat{q}_{ij}}} \overset{\xi_1}\le \frac{2nc_1(1+\epsilon)}{m} ||B_j||||A||_F.
\end{displaymath}
\begin{align}
\sum_i Var(X_i) &= \sum_i \hat{q}_{ij}(1-\hat{q}_{ij})w^2_{ij} (u_i^t)^2 \widetilde{M}^2_{ij} \le \sum_i \frac{(u_i^t)^2}{\hat{q}_{ij}} (1+\epsilon)^2||A_i||^2||B_j||^2 \nonumber\\
&\overset{\xi_2}\le \frac{2nc^2_1(1+\epsilon)^2}{m}||B_j||^2||A||^2_F, \nonumber
\end{align}
where $\xi_1$ and $\xi_2$ follows from substituting Eqs.(\ref{a1}) and (\ref{a2}).

The rest proof involves applying Bernstein's inequality to derive a high-probability bound on $\sum_i X_i$, which is almost the same as the second part of proving Lemma C.3 in~\cite{leverage}, so we omit the details here. The only difference is that, because of the extra multiplicative term $(1+\epsilon)$ in the bound of the first and second moments, the lower bound on the sample complexity $m$ should also be multiplied by an extra $(1+\epsilon)^2$ term. By restricting $0<\epsilon<1/2$, this extra multiplicative term can be ignored as long as the original lower bound of $m$ contains a large enough constant.
\end{proof}

\subsection{Proof of Theorem~\ref{mainTheorem}}\label{proofMain}
We now prove our main theorem for rank-1 case here. Rank-$r$ proof follows a similar line of reasoning and can be obtained by combining the current proof with the rank-$r$ analysis in the original proof of \prevalgo~\cite{leverage}. Similar to the previous section, we use $\widehat{u}^t$ and $\widehat{v}^{t+1}$ to denote the $t$-th and ($t$+1)-th step iterates (which are column vectors in this case) of the WAltMin algorithm. Let $u^t$ and $v^{t+1}$ be the corresponding normalized vectors. 

The closed form solution for WAltMin update at $t+1$ iteration is
\begin{displaymath}
||\widehat{u}^t|| \widehat{v}_j^{t+1} = \sigma_1^* v_j^* \frac{\sum_i \delta_{ij}w_{ij}u_i^t u_i^*}{\sum_i \delta_{ij}w_{ij}(u_i^t)^2} + \frac{\sum_i \delta_{ij}w_{ij}u_i^t (\widetilde{M}-M_1)_{ij}}{\sum_i \delta_{ij}w_{ij}(u_i^t)^2}.
\end{displaymath}
Writing in matrix form, we get
\begin{equation}
||\widehat{u}^t|| \widehat{v}_j^{t+1} = \sigma_1^*\langle u^*,u^t\rangle v^* - \sigma_1^*B^{-1}(\langle u^*,u^t\rangle B-C)v^* +B^{-1}y, \label{upd}
\end{equation}
where $B$ and $C$ are diagonal matrices with $B_{jj} = \sum_i \delta_{ij}w_{ij}(u_i^t)^2$ and $C_{jj} = \sum_i \delta_{ij}w_{ij}u_i^t u_i^*$, and $y$ is the vector $R_{\Omega}(\widetilde{M}-M_1)^Tu^t$ with entries $y_j = \sum_i \delta_{ij}w_{ij}u_i^t (\widetilde{M}-M_1)_{ij}$.

Each term of Eq.(\ref{upd}) can be bounded as follows.
\begin{equation}
||(\langle u^*,u^t\rangle B -C)v^*|| \le dist(u^t,u^*), \quad ||B^{-1}|| \le 2, \label{sp1}
\end{equation}
\begin{equation}
||y|| = ||R_{\Omega}(\widetilde{M}-M_1)^Tu^t|| \overset{\xi_1}\le dist(u^{t}, u^*)||M-M_1|| +  \eta ||M-M_1||_F/\tilde{r} + \eta \sigma_1^*, \label{sp2}
\end{equation}
where $\xi_1$ follows directly from Lemma~\ref{Thm2}. The proof of Eq.(\ref{sp1}) is exactly the same as the proof of Lemma B.3 and B.4 in~\cite{leverage}.

According to Lemma~\ref{dist}, since the distance is decreasing geometrically, after $O(\log(\frac{1}{\zeta}))$ iterations we get 
\begin{equation}
dist(u^t, u^*) \le \zeta + 2\eta ||M-M_1||_F/\sigma_1^* + 2\eta. \label{sp3}
\end{equation}
 
Now we are ready to prove the spectral norm bound in Theorem~\ref{mainTheorem}:
\begin{align}
&\quad\;||M_1 - \widehat{u}^t (\widehat{v}^{t+1})^T|| \nonumber\\
&\le || M_1 - u^t(u^t)^TM_1|| + ||u^t(u^t)^TM_1-\widehat{u}^t (\widehat{v}^{t+1})^T|| \nonumber\\
&\le || (I - u^t(u^t)^T)M_1|| + ||u^t[(u^t)^TM_1-||\widehat{u}^t|| (\widehat{v}^{t+1})^T]|| \nonumber\\
&\overset{\xi_1}\le \sigma_1^* dist(u^t, u^*) + ||\sigma_1\langle u^t, u^*\rangle v^*-||\widehat{u}^t|| (\widehat{v}^{t+1})^T|| \nonumber \\
& \overset{\xi_2}\le \sigma_1^* dist(u^t, u^*) + ||\sigma_1^*B^{-1}(\langle u^*,u^t\rangle B-C)v^*|| + || B^{-1}y|| \nonumber\\
&\overset{\xi_3} \le \sigma_1^* dist(u^t, u^*) + 2\sigma_1^* dist(u^t,u^*) + 2dist(u^{t}, u^*)||M-M_1|| +  2\eta ||M-M_1||_F/\tilde{r} + 2\eta \sigma_1^* \nonumber\\
&\overset{\xi_4} \le 5 (\zeta \sigma_1^* + 2\eta ||M-M_1||_F + 2\eta \sigma_1^*) + 2\eta ||M-M_1||_F + 2\eta \sigma_1^*  \nonumber\\
& = 5 \zeta \sigma_1^* + 12 \eta ||M-M_1||_F + 12 \eta \sigma_1^* \label{sp4}
\end{align}
where $\xi_1$ follows from the definition of $dist(u^t, u^*)$, the fact that $||u^t||=1$, and $(u^t)^TM_1=\sigma_1\langle u^t, u^*\rangle v^*$, $\xi_2$ follows from substituting Eq.(\ref{upd}), $\xi_3$ follows from Eqs.(\ref{sp1}) and (\ref{sp2}), and $\xi_4$ follows from the Eq.(\ref{sp3}), and fact that $||M-M_1||\le \sigma_1^*$, $\tilde{r}\ge 1$. Rescaling $\zeta$ to $\zeta/ (5\sigma_1^*)$ (this will influence the number of iterations) and also rescaling $\eta$ to $\eta/12$ gives us the desired spectral norm error bound in Eq.(\ref{errBound}). This completes our proof of the rank-1 case. Rank-$r$ proof follows a similar line of reasoning and can be obtained by combining the current proof with the rank-$r$ analysis in the original proof of \prevalgo~\cite{leverage}. 

\subsection{Sampling}\label{sec:app_samp}
We describe a way to sample $m$ elements in $O(m\log(n))$ time using distribution $q_{ij}$ defined in Eq.~\eqref{q_ij}. Naively one can compute all the $n^2$ entries of $\min\{q_{ij},1\}$ and toss a coin for each entry, which takes $O(n^2)$ time. Instead of this binomial sampling we can switch to row wise multinomial sampling. For this, first estimate the expected number of samples per row $m_i = m(\frac{||A_i||^2}{2||A||_F^2}+\frac{1}{2n})$. 
Now sample $m_1$ entries from row $1$ according to the multinomial distribution, 
\begin{displaymath}
\widetilde{q}_{1j} = \frac{m}{m_1}\cdot(\frac{||A_1||^2}{2n||A||^2_F}+\frac{||B_j||^2}{2n||B||^2_F}) = \frac{\frac{||A_1||^2}{2n||A||^2_F}+\frac{||B_j||^2}{2n||B||^2_F}}{\frac{||A_i||^2}{2||A||_F^2}+\frac{1}{2n}}.
\end{displaymath} 
Note that $\sum_j \widetilde{q}_{1j} =1$. To sample from this distribution, we can generate a random number in the interval $[0,1]$, and then locate the corresponding column index by binary searching over the cumulative distribution function (CDF) of $\widetilde{q}_{1j}$. This takes $O(n)$ time for setting up the distribution and $O(m_1 \log(n))$ time to sample. For subsequent row $i$, we only need $O(m_i \log(n))$ time to sample $m_i$ entries. This is because for binary search to work, only  $O(m_i \log(n))$ entries of the CDF vector needs to be computed and checked. Note that the specific form of $\widetilde{q}_{ij}$ ensures that its CDF entries can be updated in an efficient way (since we only need to update the linear shift and scale). Hence, sampling $m$ elements takes a total $O(m \log(n))$ time. Furthermore, the error in this model is bounded up to a factor of 2 of the error achieved by the Binomial model~\cite{candes2009exact}~\cite{kannan2014principal}. For more details please see our Spark implementation.

\section{Related work}\label{sec:related}
{\bf Approximate matrix multiplication:} 

In the seminal work of \cite{drineas2006fast}, Drineas et al. give a randomized algorithm which samples few rows of $A$ and $B$ and computes the approximate product. The distribution depends on the row norms of the matrices and the algorithm achieves an additive error proportional to $||A||_F ||B||_F$. Later Sarlos~\cite{sarlos} propose a sketching based algorithm, which computes sketched matrices and then outputs their product. The analysis for this algorithm is then improved by~\cite{clarkson2009numerical}. All of these results compare the error $||A^TB -\tilde{A}^T \tilde{B}||_F$ in Frobenius norm.

For spectral norm bound of the form $||A^T B-C||_2 \leq \eps ||A||_2 ||B||_2$, the authors in~\cite{sarlos, clarkson2013low} show that the sketch size needs to satisfy $O(r/\eps^2)$, where $r=rank(A)+rank(B)$. This dependence on rank is later improved to stable rank in~\cite{magen2011low}, but at the cost of a weaker dependence on $\eps$. Recently, Cohen et al.~\cite{stablerank} further improve the dependence on $\eps$ and give a bound of $O(\tilde{r}/\eps^2)$, where $\tilde{r}$ is the maximum stable rank. Note that the sketching based algorithm does not output a low rank matrix. As shown in Figure~\ref{tildeM}, rescaling by the actual column norms provide a better estimator than just using the sketched matrices. Furthermore, we show that taking SVD on the sketched matrices gives higher error rate than our algorithm (see Figure~\ref{siftbag}). 

{\bf Low rank approximation:}
\cite{frieze2004fast} introduced the problem of computing low rank approximation of a given matrix using only few passes over the data. They gave an algorithm that samples few rows and columns of the matrix and computes its SVD for low rank approximation. They show that this algorithm achieves additive error guarantees in Frobenius norm. \cite{drineas2006subspace, sarlos, har2006low, deshpande2006adaptive} have later developed algorithms using various sketching techniques like Gaussian projection, random Hadamard transform and volume sampling that achieve relative error in Frobenius norm.\cite{woolfe2008fast, nguyen2009fast, halko2011finding, boutsidis2013improved} improved the analysis of these algorithms and provided error guarantees in spectral norm. More recently \cite{clarkson2013low} presented an algorithm based on subspace embedding that computes the sketches in the input sparsity time.

Another class of methods use entrywise sampling instead of sketching to compute low rank approximation. \cite{achlioptas2001fast} considered an uniform entrywise sampling algorithm followed by SVD to compute low rank approximation. This gives an additive approximation error. More recently \cite{leverage} considered biased entrywise sampling using leverage scores, followed by matrix completion to compute low rank approximation. While this algorithm achieves relative error approximation, it takes two passes over the data. 

There is also lot of interesting work on computing PCA over streaming data under some statistical assumptions, e.g., ~\cite{balsubramani2013fast, mitliagkas2013memory, boutsidis2015online, shamir2015stochastic}. In contrast, our model does not put any assumptions on the input matrix. Besides, our goal here is to get a low rank matrix and not just the subspace.

\end{document}